\newcommand{\figcaption}[1]{\def\@captype{figure}\caption{#1}}
\newcommand{\tblcaption}[1]{\def\@captype{table}\caption{#1}}
\def\Hline{%
\noalign{\ifnum0=`}\fi\hrule \@height 2pt \futurelet
\reserved@a\@xhline}
\renewcommand{\@maketitle}{\newpage
\begin{center}
{\LARGE \@title \par} \vskip 1.5em {\large \lineskip .5em
\begin{tabular}[t]{c}\@author
\end{tabular}\par}
{\large \@date} 
\end{center}
\par}
\renewcommand\section{\@startsection {section}{1}%
{\z@}%
{-3ex}%
{2ex}%
{\normalfont\normalsize\bfseries}}
\renewcommand\subsection{\@startsection {subsection}{1}%
{\z@}%
{-2ex}%
{0.5ex}%
{\normalfont\normalsize\itshape}}
\renewcommand\subsubsection{\@startsection {subsubsection}{1}%
{\z@}%
{-2ex}%
{0.5ex}%
{\normalfont\normalsize\itshape}}
\newtheorem{theo}{Theorem}[section]
\title{{\Large Modular Representation of Layered Neural Networks}}
\author{{\normalsize Chihiro Watanabe}\thanks{\textit{Email address:} watanabe.chihiro@lab.ntt.co.jp}{\normalsize ,\ \ Kaoru Hiramatsu,\ \ Kunio Kashino}\\
{\small\itshape NTT Communication Science Laboratories,}\\{\small\itshape 3-1, Morinosato Wakamiya, Atsugi-shi, Kanagawa Pref. 243-0198 Japan}}
\date{}
\begin{document}

\maketitle

\begin{abstract}
{\normalsize Layered neural networks have greatly improved the performance of various applications including image processing, speech recognition, natural language processing, and bioinformatics. However, it is still difficult to discover or interpret knowledge from the inference provided by a layered neural network, since its internal representation has many nonlinear and complex parameters embedded in hierarchical layers. Therefore, it becomes important to establish a new methodology by which layered neural networks can be understood. 

In this paper, we propose a new method for extracting a global and simplified structure from a layered neural network. Based on network analysis, the proposed method detects communities or clusters of units with similar connection patterns. We show its effectiveness by applying it to three use cases. 
(1) Network decomposition: it can decompose a trained neural network into multiple small independent networks thus dividing the problem and reducing the computation time. (2) Training assessment: the appropriateness of a trained result with a given hyperparameter or randomly chosen initial parameters can be evaluated by using a modularity index. And (3) data analysis: in practical data it reveals the community structure in the input, hidden, and output layers, which serves as a clue for discovering knowledge from a trained neural network. \vspace{0.5em}\\
\textit{Keywords}: layered neural networks, network analysis, community detection}
\end{abstract}

\newpage

\section{Introduction} 
\label{sec:intro}

Layered neural networks have recently been applied to various tasks \cite{Bengio2013, LeCun2015}, including image processing \cite{Krizhevsky2012,Tompson2014}, speech recognition \cite{Hinton2012,Sainath2013}, natural language processing \cite{Collobert2011,Sutskever2014}, and bioinformatics \cite{Leung2014, Xiong2015}. Although they have simple layered structures of units and connections, they outperform other conventional models by their ability to learn complex nonlinear relationships between input and output data. In each layer, inputs are transformed into more abstract representations under a given set of the model parameters. These parameters are automatically optimized through training so that they extract the important features of the input data. In other words, it does not require either careful feature engineering by hand, or expert knowledge of the data. This advantage has made layered neural networks successful in a wide range of tasks, as mentioned above.

However, the inference provided by a layered neural network consists of a large number of nonlinear and complex parameters, which makes it difficult for human beings to understand it. 
More complex relationships between input and output can be represented as the network becomes deeper or the number of units in each hidden layer increases, however interpretation becomes more difficult. The large number of parameters also causes problems in terms of computational time, memory and over-fitting, so it is important to reduce the parameters appropriately. 
Since it is difficult to read the underlying structure of a neural network and to identify the parameters that are important to keep, we must perform experimental trials to find the appropriate values of the hyperparameters and the random initial parameters that achieve the best trained result. 

In this paper, to overcome such difficulties, we propose a new method for extracting a global and simplified structure from a layered neural network (For example, Figure \ref{fig:exp1m} and \ref{fig:exp3m}). Based on network analysis, the proposed method defines a modular representation of the original trained neural network by detecting communities or clusters of units with similar connection patterns. 
Although the modular neural network proposed by \cite{Jacobs1991, Azam2000} has a similar name, it takes the opposite approach to ours. In fact, it constructs the model structure before training with multiple split neural networks inside it. Then, each small neural network works as an expert of a subset task. Our proposed method is based on the community detection algorithm. To date, various methods have been proposed to express the characteristics of diverse complex networks without layered structures \cite{Newman2004, Estrada2005, Newman2006, Meunier2006, Newman2007}, however, no method has been developed for detecting the community structures of trained layered neural networks. 

The difficulty of conventional community detection from a layered neural network arises from the fact that an assumption commonly used in almost all conventional methods does not hold for layered neural networks: to detect the community structure of network, previous approaches assume that there are more intra-community edges that connect vertices inside a community than inter-community edges that connect vertices in mutually different communities. A network with such a characteristic is called assortative. This seems to be a natural assumption, for instance, for a network of relationships between friends. In layered neural networks, however, units in the same layer do not connect to each other and they only connect via units in their parent or child layers. This characteristic is similar to that of a bipartite graph, and such networks are called disassortative. It is not appropriate to apply conventional methods based on the assumption of an assortative network to a layered neural network. A basic community detection method that can be applied to either assortative or disassortative networks has been proposed by Newman et al \cite{Newman2007}. In this paper, we propose an extension of this method for extracting modular representations of layered neural networks.

The proposed method can be employed for various purposes. In this paper, we show its effectiveness with the following three applications. 
\begin{enumerate}
\item \textbf{Decomposition of layered neural network into independent networks}: the proposed method decomposes a trained neural network into multiple small independent neural networks. In such a case, the output estimation by the original neural network can be regarded as a set of independent estimations made by the decomposed neural networks. In other words, it divides the problem and reduces the overall computation time. In section \ref{sec:decomposition}, we show that our method can properly decompose a neural network into multiple independent networks, where the data consist of multiple independent vectors. 
\item \textbf{Generalization error estimation from community structure}: modularity \cite{Newman2004} is defined as a measure of the effectiveness of a community detection result. Section \ref{sec:gee} reveals that there is a correlation between modularity and the generalization error of a layered neural network. It is shown that the appropriateness of the trained result can be estimated from the community structure of the network. 
\item \textbf{Knowledge discovery from modular representation}: the modular representation extracted by the proposed method serves as a clue for understanding the trained result of a layered neural network. 
It extracts the community structure in the input, hidden, and output layer. In section \ref{sec:kd}, we introduce the result of applying the proposed method to practical data. 
\end{enumerate}

The remaining part of this paper is composed as follows: we first describe a layered neural network model in section \ref{sec:lnn}. Then, we explain our proposed method for extracting a modular representation of a neural network in section \ref{sec:communitydetect}. The experimental results are reported in section \ref{sec:experiment}, which show the effectiveness of the proposed method in the above three applications. In section \ref{sec:discussion}, we discuss the experimental results. Section \ref{sec:conclusion} concludes this paper. 

\section{Layered neural networks} 
\label{sec:lnn}

We start by defining $x\in \mathbb{R}^M,\ y\in \mathbb{R}^N$ and a probability density function $q(x,y)$ on $\mathbb{R}^M\times \mathbb{R}^N$. A training data set $\{(X_i,Y_i)\}_{i=1}^n$ with a sample size $n$ is assumed to be generated independently from $q(x,y)$. Let $f(x,w)$ be a function from $x\in \mathbb{R}^M,\ w\in \mathbb{R}^L$ to $\mathbb{R}^N$ of a layered neural network that estimates an output $y$ from an input $x$ and a parameter $w$. 

For a layered neural network, $w=\{\omega^d_{ij}, \theta^d_i\}$, where $\omega^d_{ij}$ is the weight of connection between the $i$-th unit in the depth $d$ layer and the $j$-th unit in the depth $d+1$ layer, and $\theta^d_i$ is the bias of the $i$-th unit in the depth $d$ layer. A layered neural network with $D$ layers is represented by the following function: 
\begin{eqnarray*}
  f_j(x,w) &=& \sigma(\sum_i \omega^{D-1}_{ij} o^{D-1}_i+\theta^{D-1}_j),\\
  o^{D-1}_j &=& \sigma(\sum_i \omega^{D-2}_{ij} o^{D-2}_i+\theta^{D-2}_j),\\
  &\vdots& \\
  o^2_j &=& \sigma(\sum_i \omega^1_{ij} x_i +\theta^1_j),
\end{eqnarray*}
where a sigmoid function is defined by
\begin{eqnarray*}
  \sigma (x)=\frac{1}{1+\exp (-x)}.
\end{eqnarray*}

The training error $E(w)$ and the generalization error $G(w)$ are respectively defined by
\begin{eqnarray*}
  E(w) &=& \frac{1}{n} \sum_{i=1}^n \|Y_i-f(X_i,w)\|^2,\\
  G(w) &=& \int \|y-f(x,w)\|^2 q(x,y)dxdy,
\end{eqnarray*}
where $\|\cdot\|$ is the Euclidean norm of $\mathbb{R}^N$.

The generalization error is approximated by
\begin{eqnarray*}
  G(w)\approx \frac{1}{m} \sum_{j=1}^m \|{Y_j}'-f({X_j}',w)\|^2,
\end{eqnarray*}
where $\{({X_j}', {Y_j}')\}_{j=1}^m$ is a test data set that is independent of the training data set.

To construct a sparse neural network, we adopt the LASSO method \cite{Ishikawa1990,Tibshirani1994} in which the minimized function is defined by
\begin{eqnarray*}
  H(w) &=& \frac{n}{2}\ E(w)+\lambda \sum_{d,i,j} |\omega^d_{ij}|,
\end{eqnarray*}
where $\lambda$ is a hyperparameter.

The parameters are trained by the stochastic steepest descent method, 
\begin{eqnarray}
  \Delta w &=& -\eta \nabla H_i (w)\nonumber \\
  &=& -\eta \ \Bigl(\frac{1}{2}\nabla \{\|Y_i-f(X_i,w)\|^2\}+\lambda \ \mathrm{sgn}(w)\Bigr),
  \label{eq:dw}
\end{eqnarray}
where $H_i (w)$ is the training error computed only from the $i$-th sample $(X_i, Y_i)$. 
Here, $\eta$ is defined for training time $t$ such that
\begin{eqnarray*}
  \eta(t)\propto \frac{1}{t},
\end{eqnarray*}
which is sufficient for convergence of the stochastic steepest descent. 
Eq. (\ref{eq:dw}) is numerically calculated by the following procedure, which is called error back propagation \cite{Werbos1974,Rumelhart1986}: for the $D$-th layer, 
\begin{eqnarray*}
  \delta^{D}_j &=& (o^{D}_j-y_j)\ o^{D}_j\ (1-o^{D}_j),\\
  \Delta \omega^{D-1}_{ij} &=& -\eta (\delta^{D}_j o^{D-1}_i+\lambda \ \mathrm{sgn}(\omega^{D-1}_{ij})),\\
  \Delta \theta^D_j &=& -\eta \delta^{D}_j.
\end{eqnarray*}
For $d=D-1,\ D-2,\cdots, 2$,
\begin{eqnarray*}
  \delta^d_j &=& \sum_{k=1}^{l_{d+1}} \delta^{d+1}_k \omega^d_{jk}\ o^d_j\ (1-o^d_j),\\
  \Delta \omega^{d-1}_{ij} &=& -\eta (\delta^d_j o^{d-1}_i+\lambda \ \mathrm{sgn}(\omega^{d-1}_{ij})),\\
  \Delta \theta^d_j &=& -\eta \delta^d_j.
\end{eqnarray*}

Algorithm \ref{alg_BP} is used for training a layered neural network based on error back propagation. 
With this algorithm, we obtain a neural network whose redundant weight parameters are close to zero. 

\begin{algorithm}
\caption{Stochastic steepest descent algorithm of a layered neural network}
\label{alg_BP}
\begin{algorithmic}
\FOR{$i=1$ to $a_1*n$}
	\STATE Randomly sample $k$ from uniform distribution on $\{1, 2, \cdots, n\}$.
	\STATE $x_j \gets x^k_j$,
	\STATE $y_j \gets y^k_j$,
	\STATE where $x^k_j$ and $y^k_j$ is the $j$-th element of $k$-th sample.
	\STATE $\eta =0.8 \times \frac{a_1\times n}{a_1\times n+5\times i}$,
	\STATE where $a_1\times n$ is the number of iterations. (Here, we defined $\eta$ so that it gets smaller and smaller as the iteration proceeds, to accelerate convergence of the algorithm.)
	\STATE (1) Output calculation of all layers: let $o^d_j$ be an output of the $j$-th unit in the depth $d$ layer.
	\STATE $o^1_j \gets x_j$.
	\FOR{$d=2$ to $D$}
		\STATE $o^d_j \gets \sigma (\sum_i \omega^{d-1}_{ij} o^{d-1}_i +\theta^d_j)$.
	\ENDFOR
	\STATE (2) Update weight $\omega^d_{ij}$ and bias $\theta^d_j$ based on back propagation, where $\epsilon>0$ is a small constant.
	\STATE $\delta^{D}_j \gets (o^{D}_j-y_j)(o^{D}_j(1-o^{D}_j)+\epsilon)$.
	\STATE $\Delta \omega^{D-1}_{ij} \gets -\eta (\delta^{D}_j o^{D-1}_i+\lambda \ \mathrm{sgn}(\omega^{D-1}_{ij}))$. 
	\STATE $\Delta \theta^{D}_j \gets -\eta \delta^{D}_j$.
	\FOR{$d=D-1$ to $2$}
		\STATE $\delta^d_j \gets \sum_{j'} \delta^{d+1}_{j'} \omega^d_{jj'} (o^d_j (1-o^d_j)+\epsilon)$.
		\STATE $\Delta \omega^{d-1}_{ij} \gets -\eta (\delta^d_j o^{d-1}_i+\lambda \ \mathrm{sgn}(\omega^{d-1}_{ij}))$.
	\STATE $\Delta \theta^d_j \gets -\eta \delta^d_j$.
	\ENDFOR
	\STATE $\omega^d_{ij}\gets \omega^d_{ij}+\Delta \omega^d_{ij}$.
	\STATE $\theta^d_j \gets \theta^d_j+\Delta \theta^d_j$.
\ENDFOR
\end{algorithmic}
\end{algorithm}

\section{Modular representation of layered neural networks} 
\label{sec:communitydetect}

\begin{figure*}
  \centering
  \includegraphics[width=130mm]{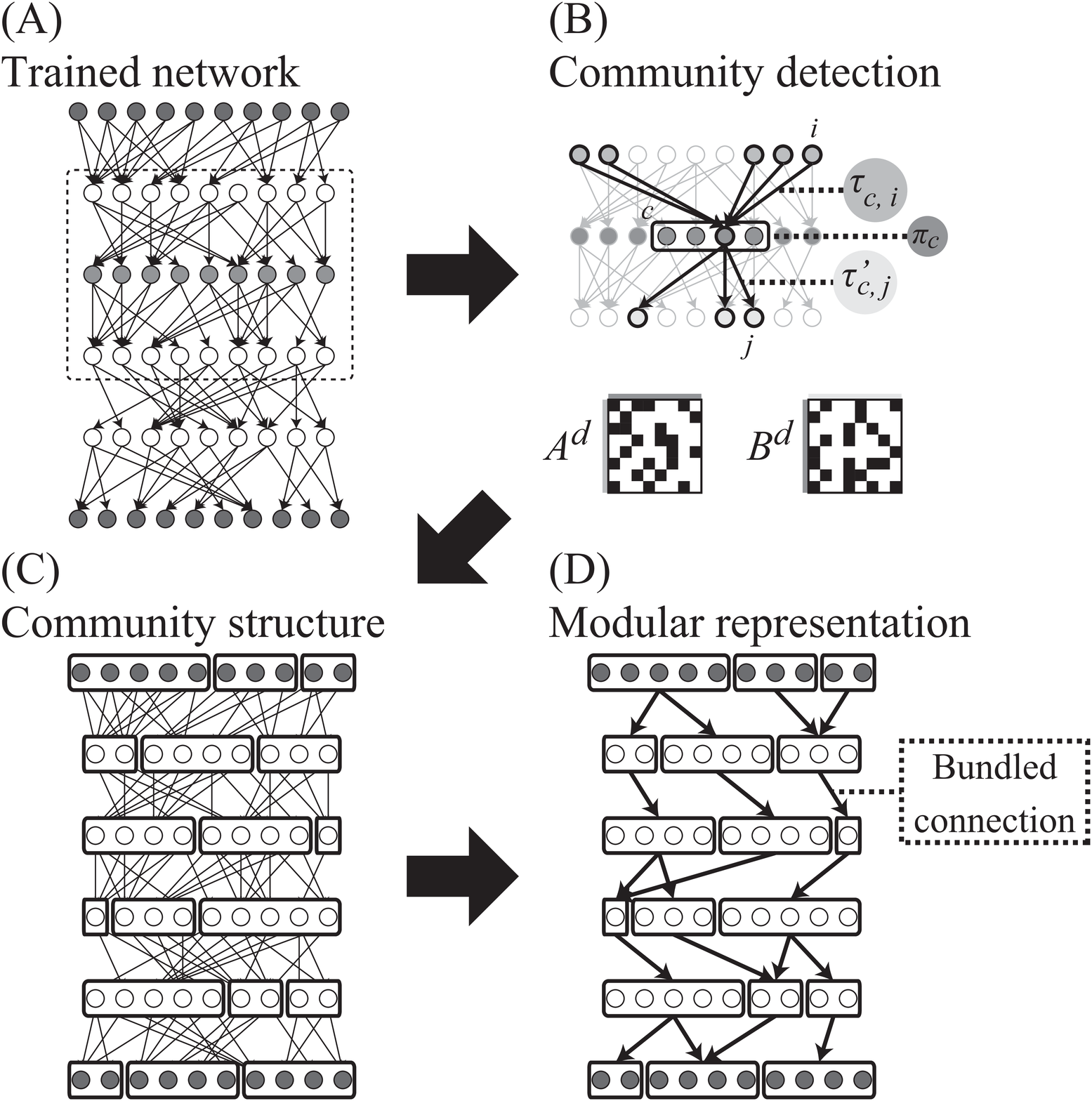}
  \caption{Proposed method. 
(A) Trained network: a layered neural network is trained by the stochastic steepest descent method. 
(B) Community detection: the connections between every layer and its adjacent layers are represented by partial network matrices $A^d$ and $B^d$. communities in each layer are extracted by using network analysis. 
(C) Community structure: the community assignments of all units are determined from the estimated parameters in (B). 
(D) Modular representation: bundled connections are defined that summarize multiple connections between pairs of communities. 
}
  \label{fig:com}
\end{figure*}

Here we propose a new community detection method, which is applied to any layered neural networks (Figure \ref{fig:com} (A)). The proposed method is an extension of the basic approach proposed by Newman et al \cite{Newman2007}. It detects communities of  assortative or disassortative networks. The key idea behind our method is that the community assignment of the units in each layer is estimated by using connection with adjacent layers. 

As shown in Figure \ref{fig:com} (B), a partial network consisting of the connections between every layer and its adjacent layers is represented in the form of two matrices: $A^d=\{A^d_{ij}\}$ and $B^d=\{B^d_{ij}\}$. The matrix $A^d$ and $B^d$ represent the connections between two layers of depth $d-1$ and $d$, and two layers of depth $d$ and $d+1$, respectively. 
In this paper, an element $A^d_{ij}$ is given by
\begin{eqnarray}
  A^d_{ij} = \begin{cases}
    1 & (|\omega^{d-1}_{ij}|\geq \xi), \\
    0 & (otherwise),
  \end{cases}  
  \label{eq:Aelement}
\end{eqnarray}
where $\xi$ is called a weight removing hyperparameter. In a similar way, an element $B^d_{ij}$ is given by
\begin{eqnarray}
  B^d_{ij} = \begin{cases}
    1 & (|\omega^{d}_{ij}|\geq \xi), \\
    0 & (otherwise).
  \end{cases}  
  \label{eq:Belement}
\end{eqnarray}
For simplicity, we denote $A^d$ and $B^d$ as $A$ and $B$, respectively, in the following explanation.

Our method is based on the assumption that units in the same community have a similar probability of connection from/to other units. 
This assumption is almost the same as that in the previous method \cite{Newman2007}, except that our method utilizes both incoming and outgoing connections of each community, and it detects communities in individual layers. 
Therefore, the community detection result is derived in a similar way to the previous method \cite{Newman2007}, as explained in the rest of this section. 
As shown on the right in Figure \ref{fig:com} (B), the statistical model for community detection has three kinds of parameters. The first parameter $\pi=\{\pi_c\}$ represents the prior probability of a unit in the depth $d$ layer that belongs to the community $c$. The conditional probability of connections for a given community $c$ is represented by the second and third parameters $\tau=\{\tau_{c,i}\}$ and $\tau'=\{\tau'_{c,j}\}$, where $\tau_{c,i}$ represents the probability that a connection to a unit in the community $c$ is attached from the $i$-th unit in the depth $d-1$ layer. Similarly, $\tau'_{c,j}$ represents the probability that a connection from a unit in the community $c$ is attached to the $j$-th unit in the depth $d+1$ layer. 
Here, we omit the index $d$ for these parameters $\pi,\ \tau,\ \tau'$ for simplicity. 
These parameters are normalized so that they satisfy the following condition:
\begin{eqnarray}
  \sum_c \pi_c=1.\ \ \  \sum_i \tau_{c,i}=1.\ \ \  \sum_j \tau'_{c,j}=1.
  \label{eq:normalization}
\end{eqnarray}

Our purpose is to find the parameters $\pi,\ \tau,\ \tau'$ that maximize the likelihood of given matrices $A,\ B$. To solve this problem, we introduce the community assignment $g=\{g_k\}$, where $g_k$ is the community of the $k$-th unit in the depth $d$ layer. The parameters are optimized so that they maximize the likelihood of $A,\ B$ and $g$: 

\begin{eqnarray*}
  \mathrm{Pr}(A,B,g|\pi,\tau,\tau')=\mathrm{Pr}(A,B|g,\pi,\tau,\tau')\ \mathrm{Pr}(g|\pi,\tau,\tau'),
\end{eqnarray*}
where 
\begin{eqnarray*}
  \mathrm{Pr}(A,B|g,\pi,\tau,\tau') &=& \prod_k \left\{ \prod_i {\Bigl(\tau_{g_k,i}\Bigr)}^{A_{i,k}} \right\} \left\{ \prod_j {\Bigl(\tau'_{g_k,j}\Bigr)}^{B_{k,j}} \right\},\\
  \mathrm{Pr}(g|\pi,\tau,\tau') &=& \prod_k \pi_{g_k}.
\end{eqnarray*}
Then, the log likelihood of $A,\ B$ and $g$ is given by
\begin{eqnarray*}
  \mathcal{L} &=& \ln \mathrm{Pr}(A,B,g|\pi,\tau,\tau')\\
  &=& \sum_k \left\{ \ln \pi_{g_k} +\sum_i A_{i,k} \ln \tau_{g_k,i} +\sum_j B_{k,j} \ln \tau'_{g_k,j} \right\}.
\end{eqnarray*}

Here, the community assignment $g$ is a latent variable and is unknown in advance, so we cannot directly calculate the above $\mathcal{L}$. Therefore, we calculate the expected log likelihood $\bar{\mathcal{L}}$ over $g$ instead. 
\begin{eqnarray*}
  \bar{\mathcal{L}} &=& \sum_{g_1}\cdots \sum_{g_l} \mathrm{Pr} (g|A,B,\pi,\tau,\tau') \sum_k \left\{ \ln \pi_{g_k} +\sum_i A_{i,k} \ln \tau_{g_k,i} +\sum_j B_{k,j} \ln \tau'_{g_k,j} \right\}\\
  &=& \sum_{k,c} \mathrm{Pr}(g_k=c|A,B,\pi,\tau,\tau') \left\{ \ln \pi_c +\sum_i A_{i,k} \ln \tau_{c,i} +\sum_j B_{k,j} \ln \tau'_{c,j} \right\},
\end{eqnarray*}
where $l$ is the number of units in the depth $d$ layer. By defining
\begin{eqnarray}
  q_{k,c}=\mathrm{Pr}(g_k=c|A,B,\pi,\tau,\tau')=\frac{\mathrm{Pr}(A,B,g_k=c|\pi,\tau,\tau')}{\mathrm{Pr}(A,B|\pi,\tau,\tau')},
  \label{eq:qdefine}
\end{eqnarray}
the above equation can be rewritten as follows:
\begin{eqnarray}
  \bar{\mathcal{L}}=\sum_{k,c} q_{k,c} \left\{ \ln \pi_c +\sum_i A_{i,k} \ln \tau_{c,i} +\sum_j B_{k,j} \ln \tau'_{c,j} \right\}.
  \label{eq:exploglh}
\end{eqnarray}

The parameter $q_{k,c}$ represents the probability that the $k$-th unit is assigned to the community $c$. In other words, the community detection result is given by the estimated $\{q_{k,c}\}$. The optimal parameters for maximizing $\bar{\mathcal{L}}$ of Eq. (\ref{eq:exploglh}) are found with the EM algorithm. The parameters $\pi,\ \tau,\ \tau'$ with given $\{q_{k,c}\}$ are iteratively optimized.

\begin{theo}
If $\{q_{k,c}\},\ \{\pi_c\},\ \{\tau_{c,i}\},\ \{\tau'_{c,j}\}$ maximizes $\bar{\mathcal{L}}$, then they satisfy 
\begin{eqnarray}
  q_{k,c}=\frac{\pi_c \left[ \prod_i {\tau_{c,i}}^{A_{i,k}} \right] \left[ \prod_j {\tau'_{c,j}}^{B_{k,j}} \right]}{\sum_s \pi_s \left[ \prod_i {\tau_{s,i}}^{A_{i,k}} \right] \left[ \prod_j {\tau'_{s,j}}^{B_{k,j}} \right]},\ (\forall k,\ c)
  \label{eq:q}
\end{eqnarray}
and
\begin{eqnarray}
  \pi_c &=& \frac{\sum_k q_{k,c}}{l},\nonumber \\
  \tau_{c,i} &=& \frac{\sum_k q_{k,c} A_{i,k}}{\sum_{k,i} q_{k,c} A_{i,k}},\nonumber \\
  \tau'_{c,j} &=& \frac{\sum_k q_{k,c} B_{k,j}}{\sum_{k,j} q_{k,c} B_{k,j}}.\ (\forall c,\ i,\ j)
  \label{eq:pitau}
\end{eqnarray}
\end{theo}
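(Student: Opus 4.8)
The plan is to treat the two displayed systems as the E-step and M-step characterizations of a stationary point of $\bar{\mathcal{L}}$, and to establish each separately.

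First I would derive Eq.~(\ref{eq:q}). This is not an optimization but an unpacking of the defining identity in Eq.~(\ref{eq:qdefine}). The key observation is that the joint model $\mathrm{Pr}(A,B,g|\pi,\tau,\tau')$ factorizes across the units $k$ of the depth $d$ layer: writing out $\mathrm{Pr}(A,B|g,\pi,\tau,\tau')\,\mathrm{Pr}(g|\pi,\tau,\tau')$ yields the product $\prod_k \bigl(\pi_{g_k}\prod_i \tau_{g_k,i}^{A_{i,k}}\prod_j \tau'_{g_k,j}^{B_{k,j}}\bigr)$. Because the factor attached to unit $k$ depends on $g$ only through $g_k$, marginalizing the numerator $\mathrm{Pr}(A,B,g_k=c|\cdots)$ over all other assignments and dividing by the fully marginalized denominator $\mathrm{Pr}(A,B|\cdots)$ makes every factor except the $k$-th one cancel. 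What survives is exactly the ratio in Eq.~(\ref{eq:q}), the denominator being the sum over communities $s$ of the $k$-th factor. So Eq.~(\ref{eq:q}) follows from Bayes' rule together with the independence structure of the model.

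Next I would establish Eq.~(\ref{eq:pitau}) by maximizing $\bar{\mathcal{L}}$ of Eq.~(\ref{eq:exploglh}) over $\pi,\tau,\tau'$ with the $q_{k,c}$ held fixed, subject to the three normalization constraints of Eq.~(\ref{eq:normalization}). Since the three parameter groups occur in separate additive terms of $\bar{\mathcal{L}}$, the maximization decouples and I would treat each with its own Lagrange multiplier. For $\pi$, the stationarity condition of $\sum_{k,c}q_{k,c}\ln\pi_c-\alpha(\sum_c\pi_c-1)$ gives $\pi_c=(\sum_k q_{k,c})/\alpha$; summing over $c$ and using $\sum_c q_{k,c}=1$ for every $k$ pins down $\alpha=l$, yielding the first line. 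The same device applied to $\tau_{c,i}$ (with a multiplier $\beta_c$ per community) and to $\tau'_{c,j}$ recovers the remaining two lines, each multiplier being fixed by the corresponding normalization.

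The step that requires the most care is the evaluation of the multiplier $\alpha$ in the $\pi$ update: it hinges on the identity $\sum_c q_{k,c}=1$, which holds because $q_{k,c}$ is a posterior probability over communities. I would also note that $\bar{\mathcal{L}}$ is concave in each parameter block, being a nonnegative combination of logarithms on the convex simplex defined by Eq.~(\ref{eq:normalization}); hence the stationary points located by the Lagrangian conditions are genuine maxima, so those conditions are sufficient and not merely necessary.
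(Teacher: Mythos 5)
Your proposal is correct and follows essentially the same route as the paper: Eq.~(\ref{eq:q}) is obtained by expanding the numerator and denominator of Eq.~(\ref{eq:qdefine}) using the per-unit factorization of the joint model, and Eq.~(\ref{eq:pitau}) is obtained by Lagrange multipliers on $\bar{\mathcal{L}}$ under the normalization constraints of Eq.~(\ref{eq:normalization}), with the multipliers fixed by those constraints (your explicit evaluation $\alpha=l$ via $\sum_c q_{k,c}=1$ just makes a step the paper leaves implicit). The added concavity remark is a harmless bonus beyond the stated necessary-condition form of the theorem.
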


\begin{proof}
The denominator and numerator in the last term of Eq. (\ref{eq:qdefine}) are given by
\begin{eqnarray*}
  \lefteqn{ \mathrm{Pr}(A,B,g_k=c|\pi,\tau,\tau') = \sum_{g_1}\cdots \sum_{g_l} \delta_{g_k,c} \mathrm{Pr}(A,B,g|\pi,\tau,\tau') }\\
  &=& \sum_{g_1}\cdots \sum_{g_l} \delta_{g_k,c} \prod_h \left\{ \pi_{g_h} \left[ \prod_i {\tau_{g_h,i}}^{A_{i,h}} \right] \left[ \prod_j {\tau'_{g_h,j}}^{B_{h,j}} \right] \right\}\\
  &=& \left\{ \pi_c \left[ \prod_i {\tau_{c,i}}^{A_{i,k}} \right] \left[ \prod_j {\tau'_{c,j}}^{B_{k,j}} \right] \right\} \left\{ \prod_{h\neq k} \sum_s \pi_s \left[ \prod_i {\tau_{s,i}}^{A_{i,h}} \right] \left[ \prod_j {\tau'_{s,j}}^{B_{h,j}} \right] \right\},
\end{eqnarray*}
and
\begin{eqnarray*}
  \mathrm{Pr}(A,B|\pi,\tau,\tau') &=& \sum_{g_1}\cdots \sum_{g_l} \mathrm{Pr}(A,B,g|\pi,\tau,\tau')\\
  &=& \prod_k \sum_s \pi_s \left[ \prod_i {\tau_{s,i}}^{A_{i,k}} \right] \left[ \prod_j {\tau'_{s,j}}^{B_{k,j}} \right],
\end{eqnarray*}
where $\delta_{i,j}$ is the Kronecker delta. Therefore, $q_{k,c}$ is given by Eq. (\ref{eq:q}).

The problem is to maximize $\bar{\mathcal{L}}$ of Eq. (\ref{eq:exploglh}) with a given $\{q_{k,c}\}$ under the condition of Eq. (\ref{eq:normalization}). This is solved with the Lagrangian undetermined multiplier method, which employs
\begin{eqnarray*}
  f = \bar{\mathcal{L}}-\alpha \sum_c \pi_c-\sum_c \beta_c \sum_i \tau_{c,i}-\sum_c \gamma_c \sum_j \tau'_{c,j},
\end{eqnarray*}
and
\begin{eqnarray}
  \frac{\partial f}{\partial \pi_c}=\frac{\partial f}{\partial \tau_{c,i}}=\frac{\partial f}{\partial \tau'_{c,j}}=0.\ (\forall c,\ i,\ j)
  \label{eq:lag1}
\end{eqnarray}
From Eq. (\ref{eq:lag1}), the following equations are derived:
\begin{eqnarray}
\frac{\partial \bar{\mathcal{L}}}{\partial \pi_c}=\alpha,\ \ \ 
\frac{\partial \bar{\mathcal{L}}}{\partial \tau_{c,i}}=\beta_c,\ \ \ 
\frac{\partial \bar{\mathcal{L}}}{\partial \tau'_{c,j}}=\gamma_c.\ (\forall c,\ i,\ j)
  \label{eq:lag2}
\end{eqnarray}
Using Eq. (\ref{eq:exploglh}) and Eq. (\ref{eq:lag2}), we obtain 
\begin{eqnarray}
\pi_c = \frac{1}{\alpha}\sum_k q_{k,c},\ \ 
\tau_{c,i} = \frac{1}{\beta_c} \sum_k q_{k,c} A_{i,k},\ \ 
\tau'_{c,j} = \frac{1}{\gamma_c} \sum_k q_{k,c} B_{k,j}.\ (\forall c,\ i,\ j)
  \label{eq:lag3}
\end{eqnarray}
From Eq. (\ref{eq:lag3}) and the condition of Eq. (\ref{eq:normalization}), Lagrange's undetermined multipliers $\alpha,\ \{\beta_c\},\ \{\gamma_c\}$ are determined, and Eq. (\ref{eq:lag3}) is rewritten as Eq. (\ref{eq:pitau}). 
\end{proof}

From the above theorem, the optimal parameters $\pi, \tau, \tau'$ and the probability of community assignment $q$ for the optimized parameters are iteratively estimated based on Eq. (\ref{eq:q}) and (\ref{eq:pitau}). In this paper, the community assigned to the $k$-th unit is determined by the $c$ that maximizes $q_{k,c}$ (Figure \ref{fig:com} (C)).

Finally, we use the following methods to determine a modular representation of a layered neural network that summarizes multiple connections between the pairs of communities (Figure \ref{fig:com} (D)). \\
\\
\textbf{Four Algorithms for Determining Bundled Connections}:
\begin{itemize}
\item Method $1$: Community $a$ and $b$ have a bundled connection iff there exists at least one connection between the pairs of units $\{i,j\},\ i\in a,\ j\in b$.
\item Method $2$: Let the number of units in communities $a$ and $b$ be $l_a$ and $l_b$, respectively, and let the number of connections between the pairs of units $\{i,j\},\ i\in a,\ j\in b$ be $l_{a,b}$. 
Communities $a$ and $b$ have a bundled connection iff $r_{a,b}\equiv \frac{l_{a,b}}{l_a l_b}\geq \zeta$ holds, where $\zeta$ is a threshold.
\item Method $3$: Among the bundled connections defined by Method $2$, only those that satisfy the following (1) OR (2) are kept and the others are removed. (1) for any community $a'$ in the same layer as community $a$, $r_{a,b}\geq r_{a',b}$. (2) for any community $b'$ in the same layer as community $b$, $r_{a,b}\geq r_{a,b'}$.
\item Method $4$: Among the bundled connections defined by Method $2$, only those that satisfy the above (1) AND (2) are kept and the others are removed. 
\end{itemize}

By these procedures, we obtain the modular representation of a layered neural network. 

\section{Experiments} 
\label{sec:experiment}

In this section, we show three applications of the proposed method: (1) the decomposition of a layered neural network into independent networks, (2) generalization error estimation from a community structure, and (3) knowledge discovery from a modular representation. Here we verify the effectiveness of the proposed method in the above three applications. 

The following processing was performed in all the experiments:
\begin{enumerate}
\setlength{\itemsep}{0cm}
\setlength{\parskip}{0cm}
\renewcommand{\labelenumi}{(\arabic{enumi})}
\item The input data were normalized so that the minimum and maximum values were $x_{\mathrm{min}}$ and $x_{\mathrm{max}}$, respectively.
\item The output data were normalized so that the minimum and maximum values were $0.01$ and $0.99$, respectively.
\item The initial parameters were independently generated as follows: $\omega^d_{ij}\overset{\text\small\rm{i.i.d.}}{\sim} \mathcal{N}(0,0.5)$. $\theta^d_j\overset{\text\small\rm{i.i.d.}}{\sim} \mathcal{N}(0,0.5)$.
\item As in Eq. (\ref{eq:Aelement}),  the connection matrix $A^d_{ij}=0.99$ if the absolute value of the connection weight between the $i$-th unit in the depth $d-1$ layer and the $j$-th unit in the depth $d$ layer is larger than a threshold $\xi$, otherwise $A^d_{ij}=0.01$. Note that $0.99$ and $0.01$ are used instead of $1$ and $0$ for stable computation. Similarly, $B^d_{ij}$ is defined from the connection weight between the $i$-th unit in the depth $d$ layer and the $j$-th unit in the depth $d+1$ layer (Eq. (\ref{eq:Belement})). All units were removed that had no connections to other units.
\item For each layer in a trained neural network, $10$ community detection trials were performed. We defined the community detection result as one that achieved the largest expected log likelihood in the last of $200$ iterations of the EM algorithm.
\item In each community detection trial, the initial values of the parameters $\pi,\ \tau,\ \tau'$ were independently generated from a uniform distribution on $(0,1)$, and then normalized so that Eq. (\ref{eq:normalization}) held. 
\item In visualization of modular representation, all communities with no output bundled connections from them were regarded as unnecessary communities. 
In the output layer, the communities with no input bundled connections were regarded in the same way as above. The bundled connections with such unnecessary communities were also removed. 
These unnecessary communities and bundled connections were detected from depth $D$ to $1$, since the unnecessary communities in the shallower layers depend on the removal of unnecessary bundled connections in the deeper layers. 
\end{enumerate}

\subsection{Decomposition of independent layered neural networks} 
\label{sec:decomposition}

We show that the proposed method can properly decompose a neural network into a set of small independent neural networks, where the data set consists of multiple independent dimensions. For validation, we made synthetic data of three independent parts, merged them, and applied the proposed method to decompose them into the three independent parts. 

\subsubsection{Generation method of synthetic data}
\label{sec:independent}

The method we used to generate the synthetic data is shown in Figure \ref{fig:exp1}. In the following, we explain the experimental settings in detail.

\begin{figure*}
  \centering
  \includegraphics[width=125mm]{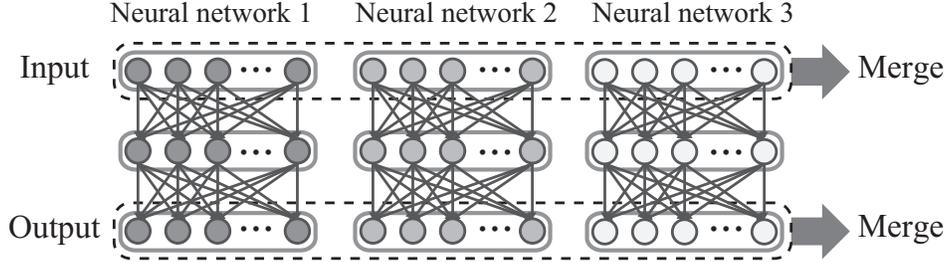}
  \caption{Method for generating synthetic data. First, three vectors of input data were independently generated. Then, each input vector was connected to a different layered neural network with independent weights and biases. Independent noises were added to the resulting three output vectors, to generate the output data. These three sets of input and output data were merged into one.}
  \label{fig:exp1}
\end{figure*}

First, three sets of input data were independently generated. All the sets contained input data with $15$ dimensions, and their values followed: $x^n_j\overset{\text\small\rm{i.i.d.}}{\sim}\mathcal{N}(0,3)$. 
Then, three neural networks were defined, each of which has independent weights and biases. In each neural network, all the layers consisted of $15$ units, and the number of hidden layers was set at one. The sets of weights and biases for the first, second and third neural networks are denoted as $\{\omega,\ \theta\}$, $\{\omega',\ \theta'\}$, and $\{\omega'',\ \theta''\}$, respectively. These parameters were randomly generated as follows: 
\begin{eqnarray*}
  \omega^d_{i,j},\ {\omega'}^d_{i,j},\ {\omega''}^d_{ij}\overset{\text\small\rm{i.i.d.}}{\sim}\mathcal{N}(0,2),\\
  \theta^d_j,\ {\theta'}^d_j,\ {\theta''}^d_j\overset{\text\small\rm{i.i.d.}}{\sim}\mathcal{N}(0,0.5).
\end{eqnarray*}
For the weights $\omega,\ \omega'$ and $\omega''$, the connections with absolute values of one or smaller were replaced by $0$. 

Finally, three sets of output data were generated by using the above input data and neural networks by adding independent noise following $\mathcal{N}(0,0.05)$. The three generated sets of input and output data were merged into one set of data, as shown in Figure \ref{fig:exp1}. 

\subsubsection{Neural network training and modular representation extraction}
\label{sec:independentmr}

We trained another neural network with $45$ dimensions for input, hidden and output layer using the merged data. 
Then, a modular representation of the trained neural network was made with the proposed method. 
The results of the trained neural network, its community structure, and its modular representation are shown in Figures \ref{fig:exp1o}, \ref{fig:exp1c}, and \ref{fig:exp1m}, respectively. The numbers above the input layer and below the output layer are the indices of the three sets of data. These results showed that the proposed method could decompose the trained neural network into three independent networks. 

\begin{figure*}
  \centering
  \includegraphics[width=100mm]{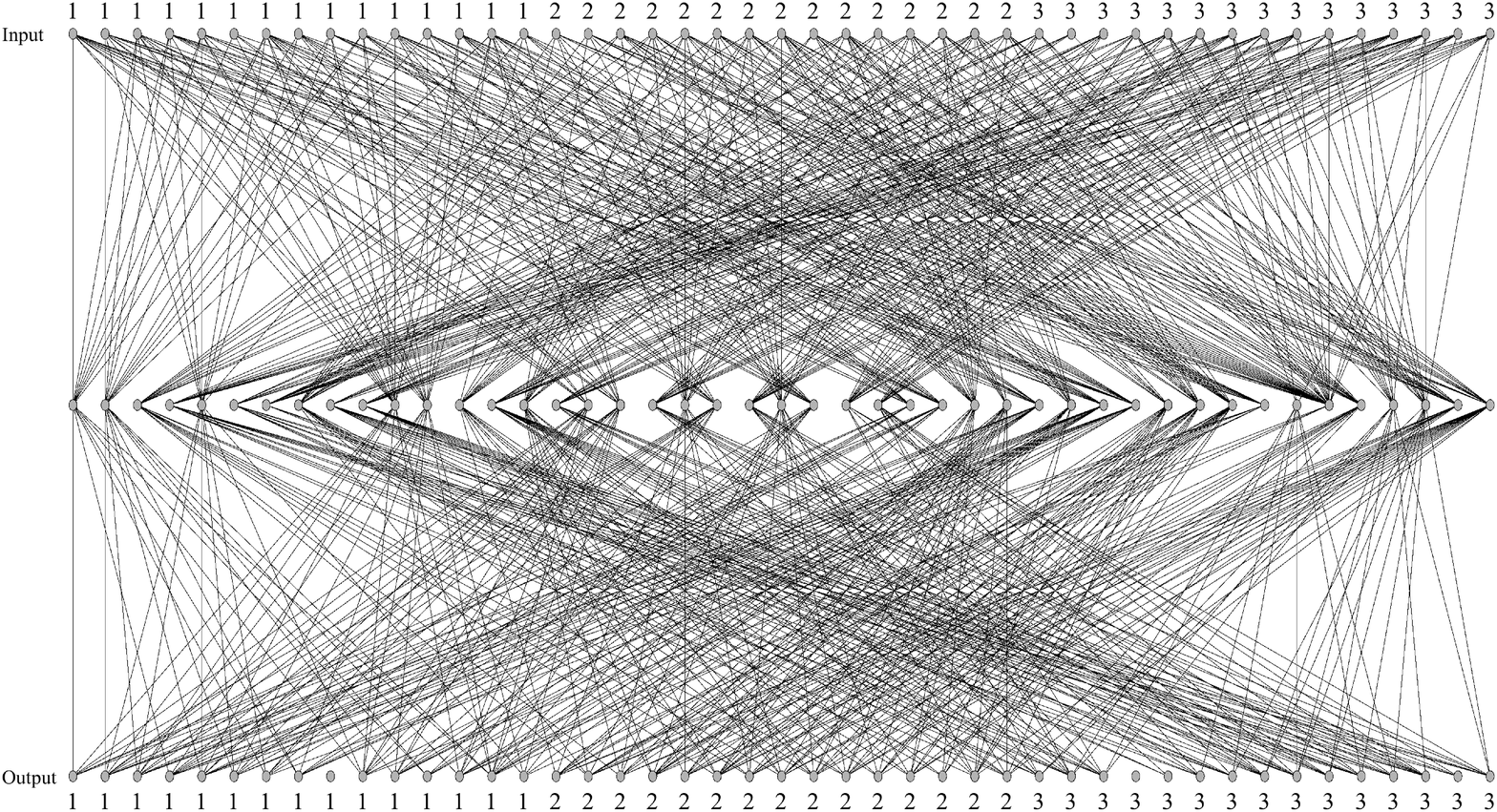}\vspace{-6mm}
  \caption{Neural network trained using the synthetic data. The numbers above the input layer and below the output layer are the indices of the three data sets.}\vspace{1mm}
  \label{fig:exp1o}
  \centering
  \includegraphics[width=100mm]{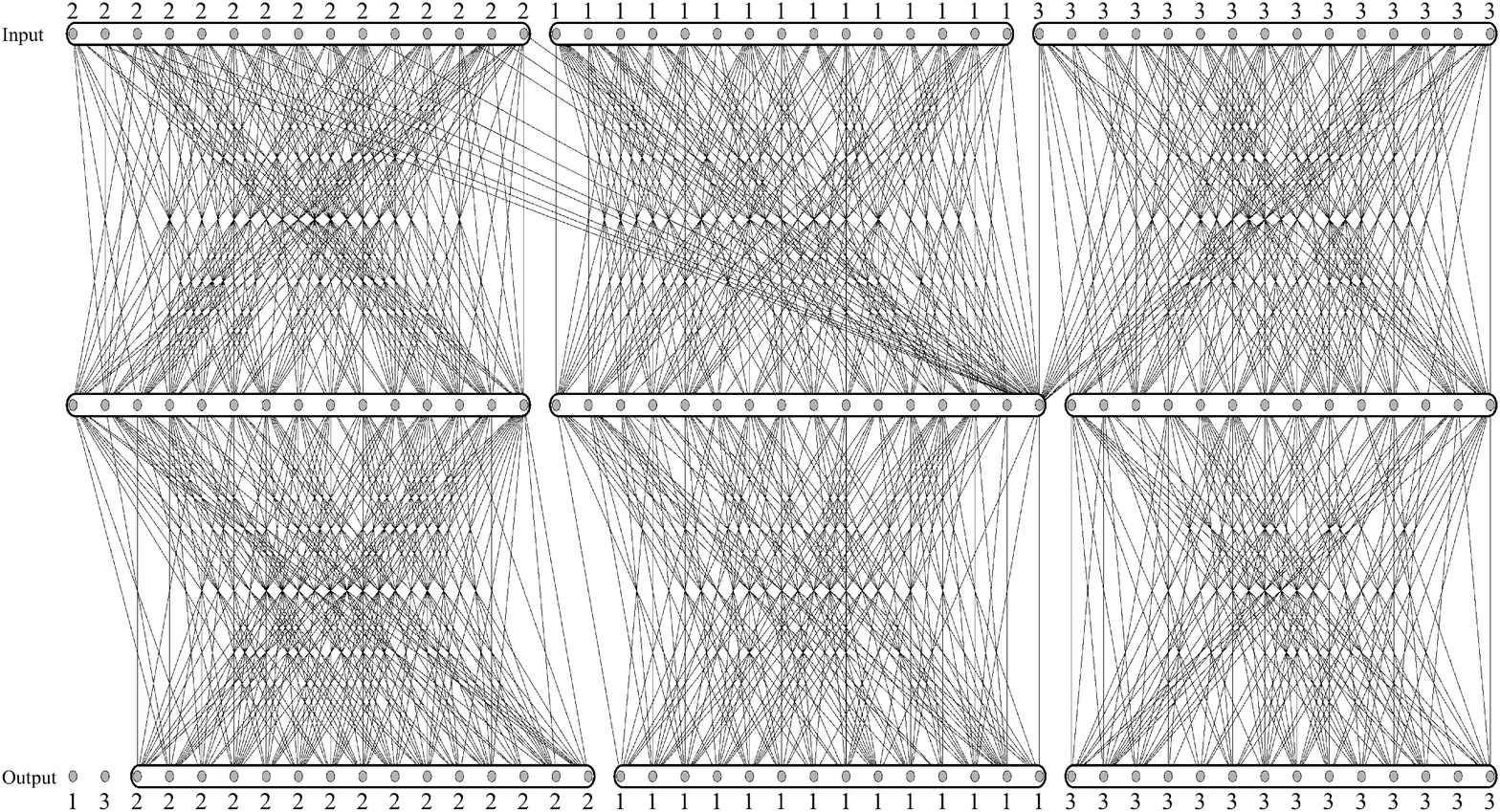}\vspace{-6mm}
  \caption{Result of community detection with proposed method.}\vspace{-1mm}
  \label{fig:exp1c}
  \centering
  \includegraphics[width=100mm]{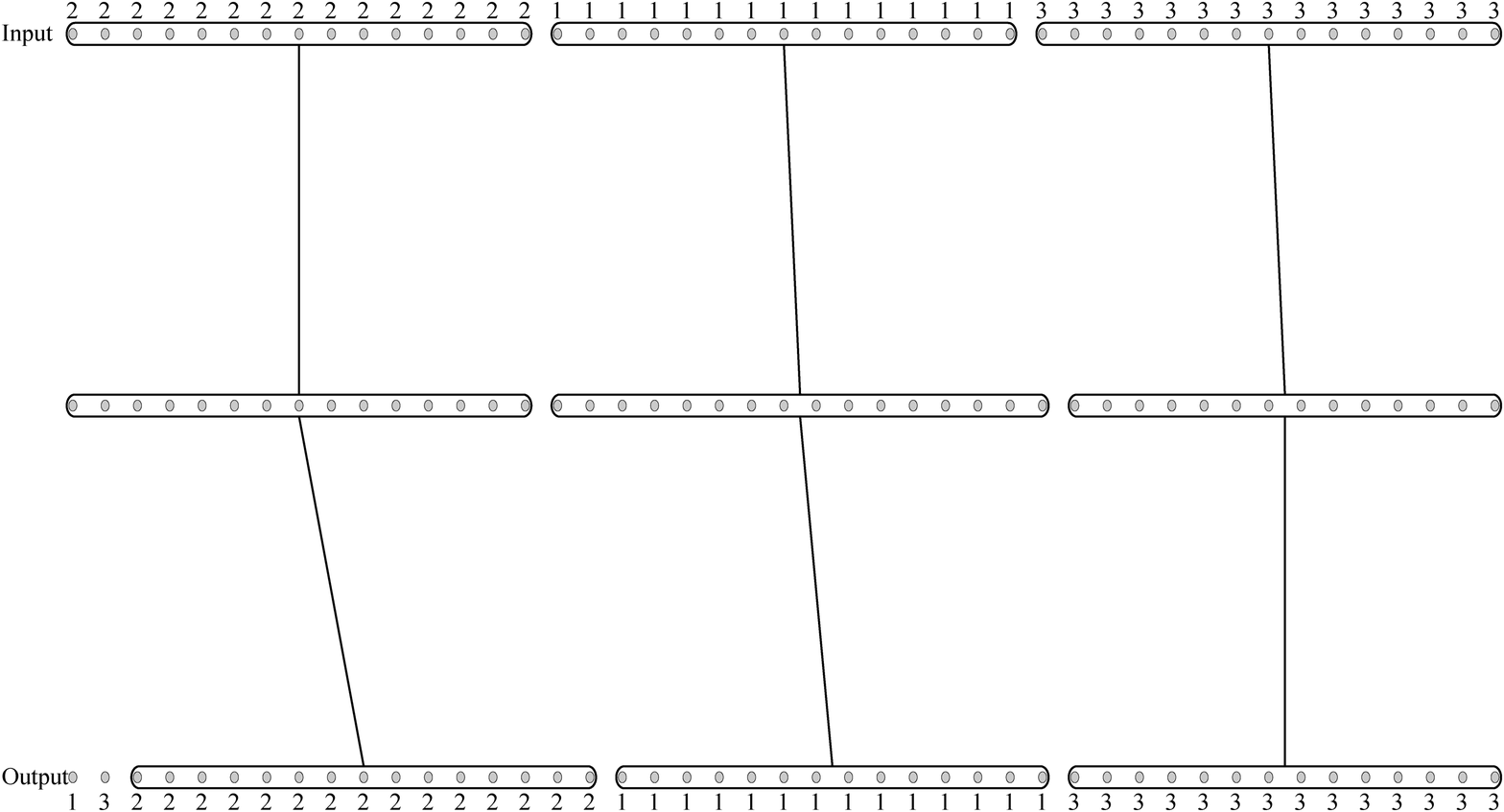}\vspace{-6mm}
  \caption{Extracted modular representation of trained neural network. These results showed that our method could decompose the trained neural network into three independent networks.}
  \label{fig:exp1m}
\end{figure*}

\subsubsection{Modular representation extraction using data generated by mutually dependent neural networks}

We also conducted an experiment to extract modular representations from neural networks trained with data generated by mutually dependent neural networks. To control the extent of independence between three neural networks, we added $\kappa$ bundled connections between randomly chosen pairs of communities in mutually adjacent layers, where $\kappa$ is set at $1,\ 2,\ \cdots,\ 10$. For example, in Figure \ref{fig:exp1}, the community in the input layer of neural network 1 and the community in the hidden layer of neural network 2 are randomly chosen and and a bundled connection is added between them. The connection weights between a pair of communities with a bundled connection are independently generated from $\mathcal{N}(0,2)$. As in the experiment described in section \ref{sec:independent}, output data of three communities were generated by using the above neural network with additional bundled connections and three sets of input data that were independently generated. 

As in section \ref{sec:independentmr}, a modular representations of the trained neural networks were extracted with the proposed method, with varying number of true additional bundled connections ($\kappa =1,\ 2,\ \cdots,\ 10$). The results of the modular representations are shown in Figure \ref{fig:exp1bc}. These results showed that our proposed method could almost properly decompose the units in all layers when $\kappa \leq 7$ holds. As the number of additional bundled connections increases ($\kappa \geq 8$), a pair of communities in the same layer is more likely to share much connections to other units in the ground truth neural network, resulting that the units in such ground truth communities cannot be decomposed properly.

\begin{figure*}
  \centering
  \includegraphics[width=62mm]{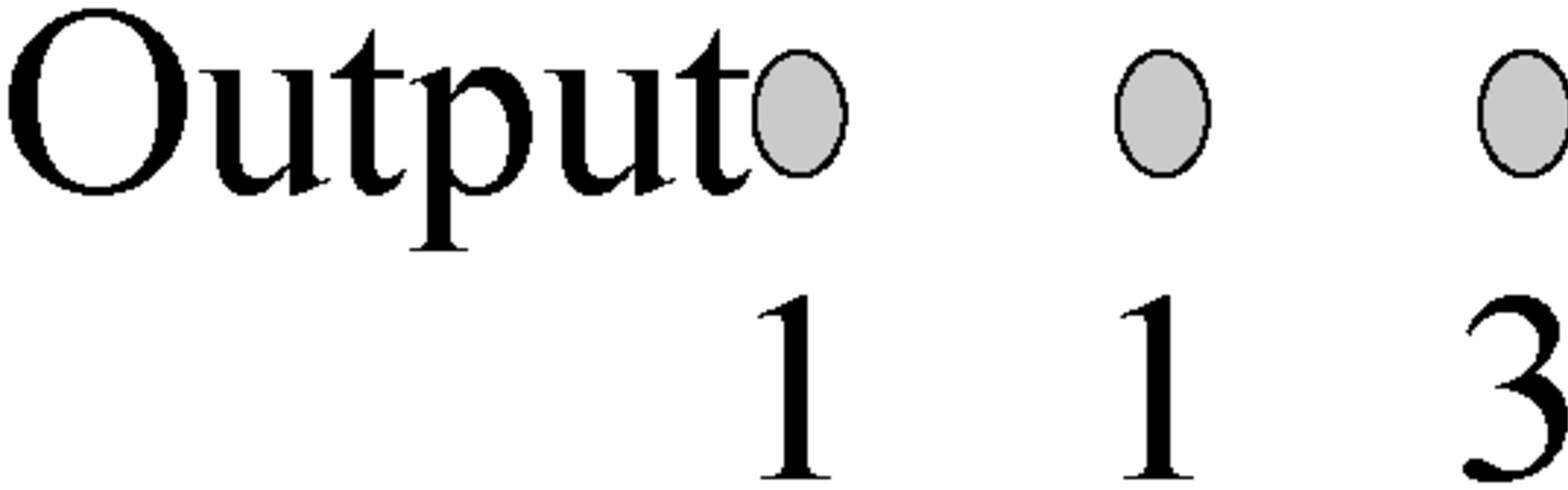}\hspace{3mm}
  \includegraphics[width=62mm]{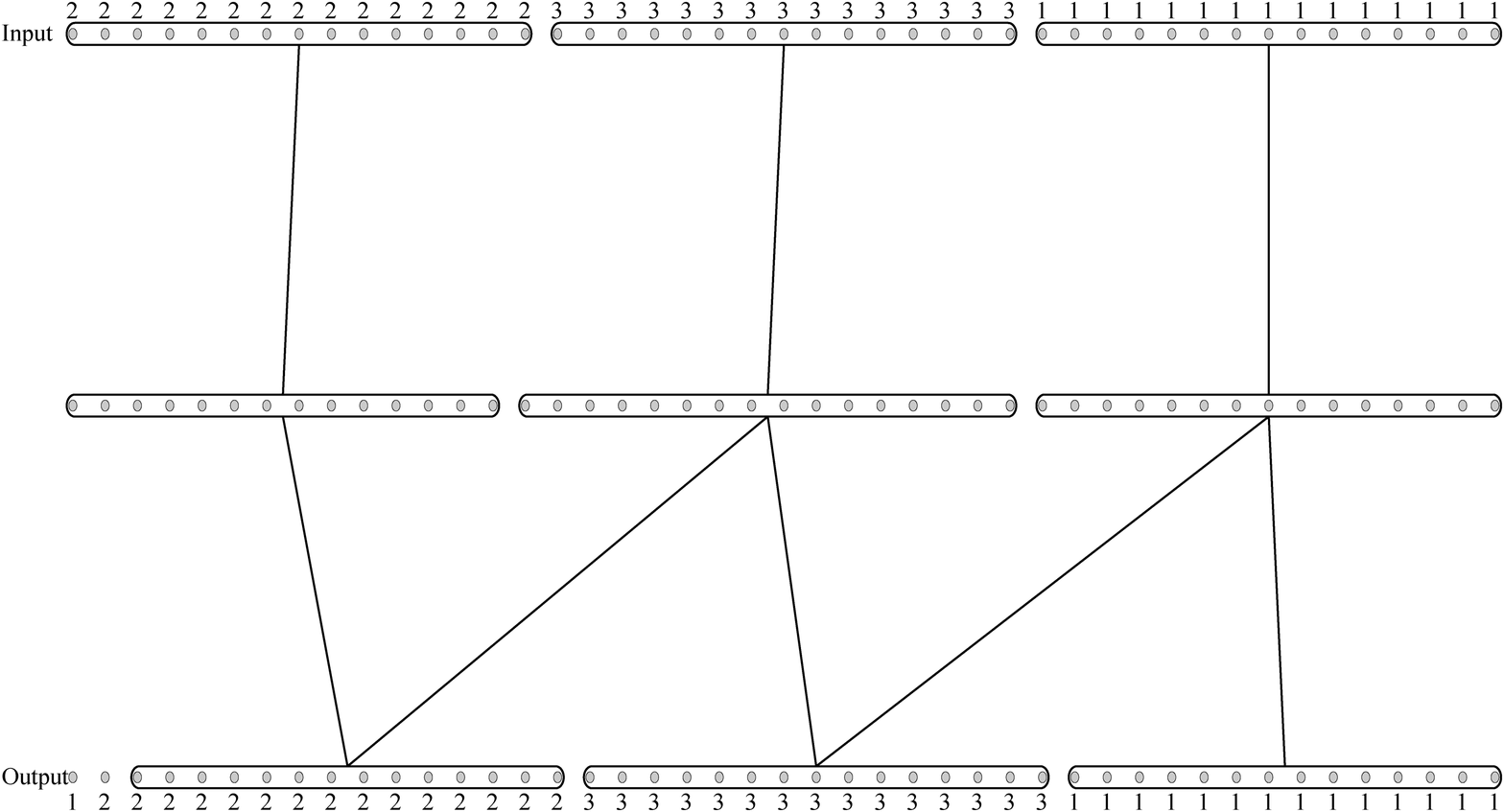}\vspace{1mm} \\
  \includegraphics[width=62mm]{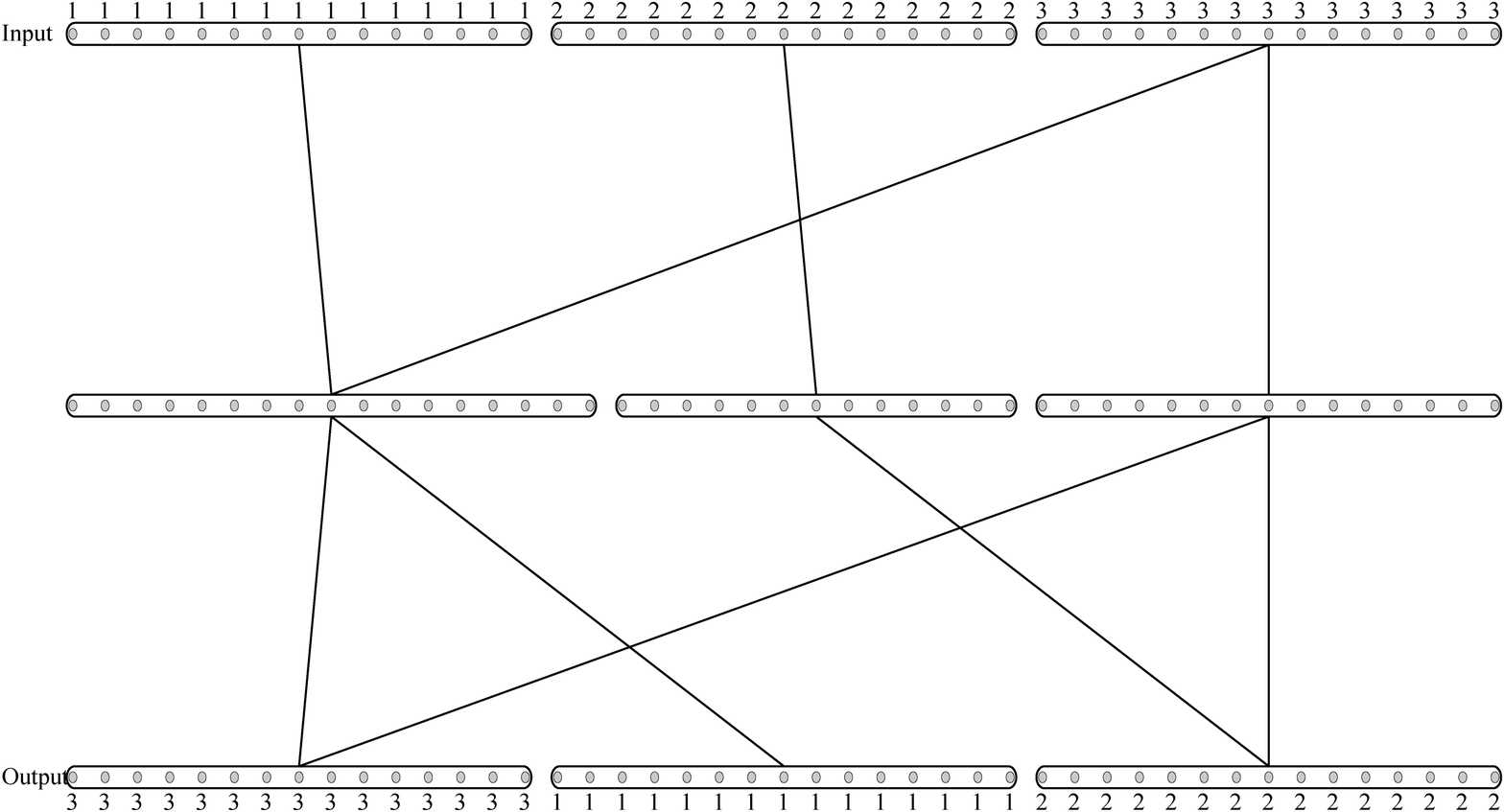}\hspace{3mm}
  \includegraphics[width=62mm]{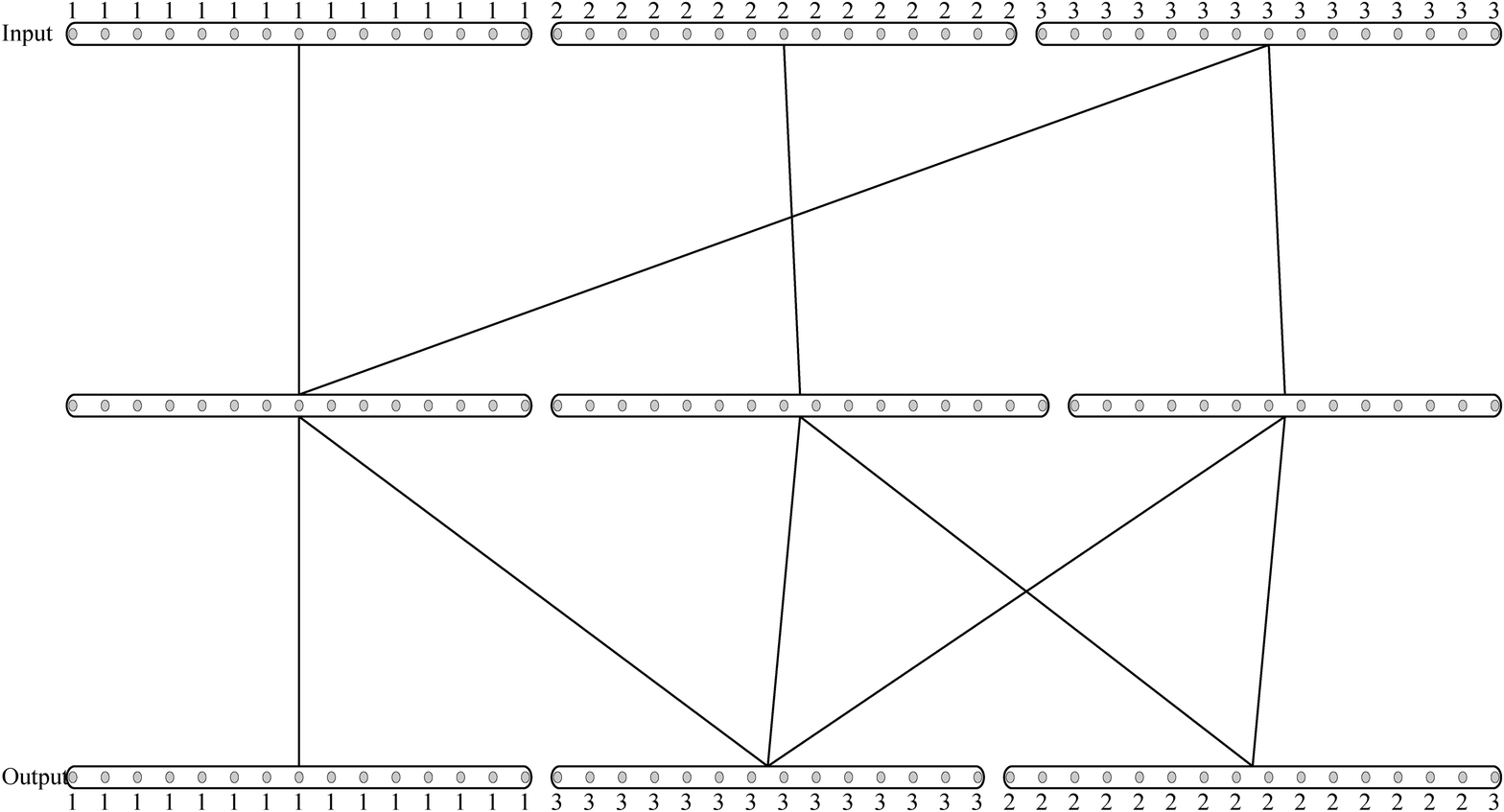}\vspace{1mm} \\
  \includegraphics[width=62mm]{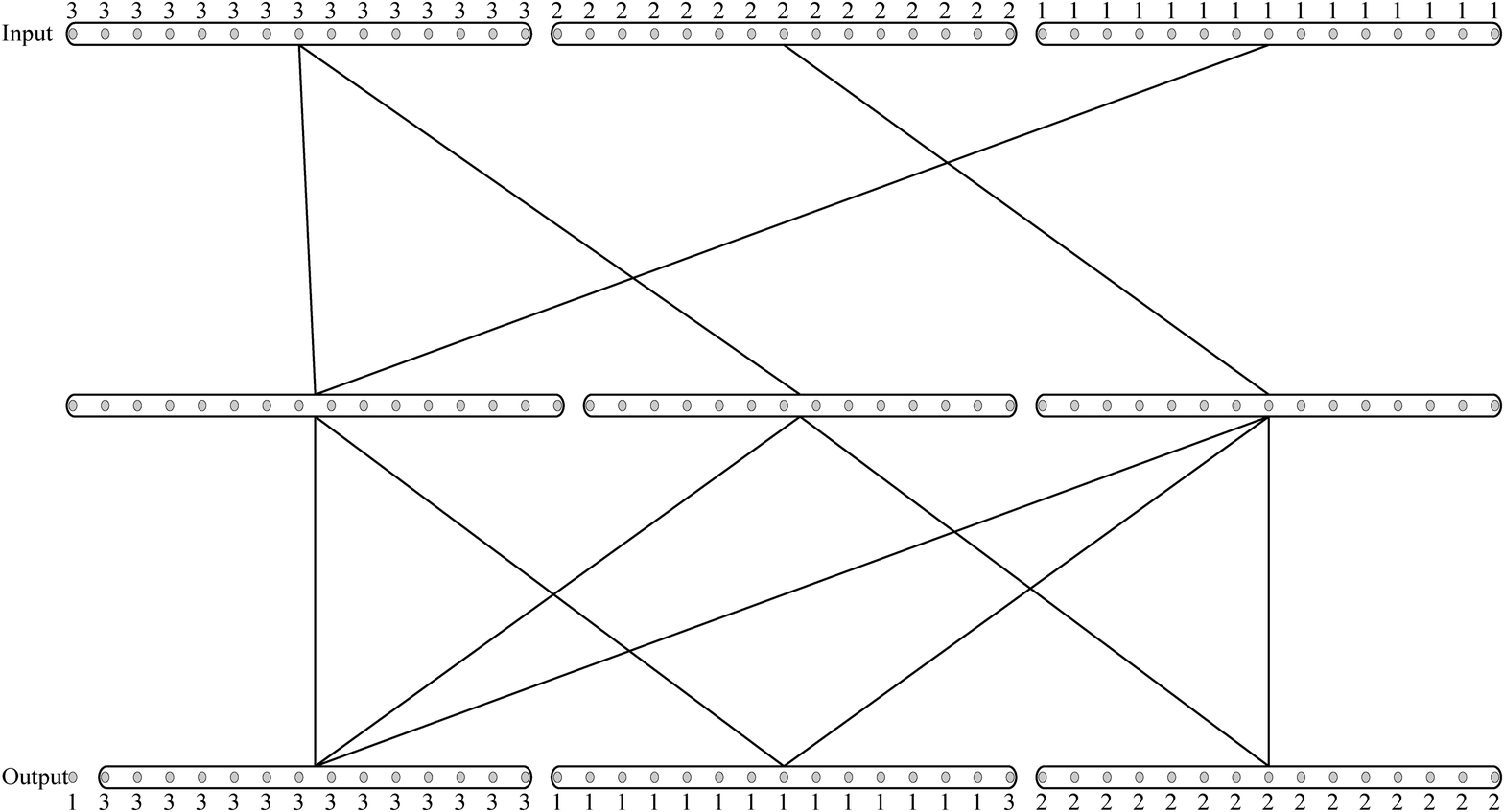}\hspace{3mm}
  \includegraphics[width=62mm]{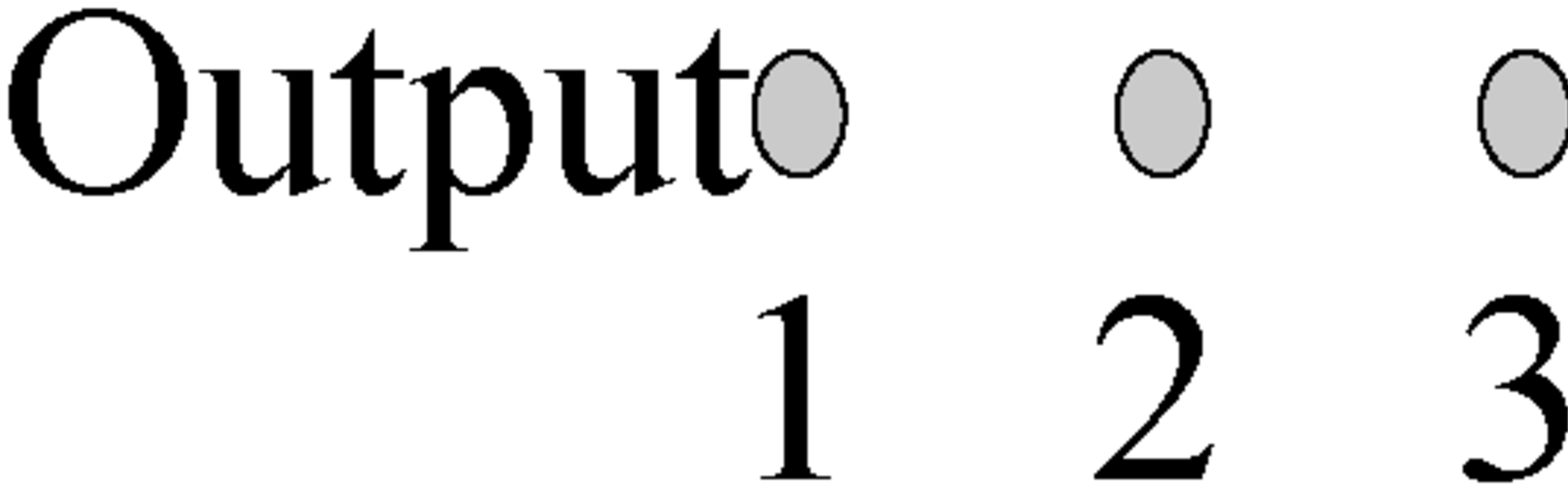}\vspace{1mm} \\
  \includegraphics[width=62mm]{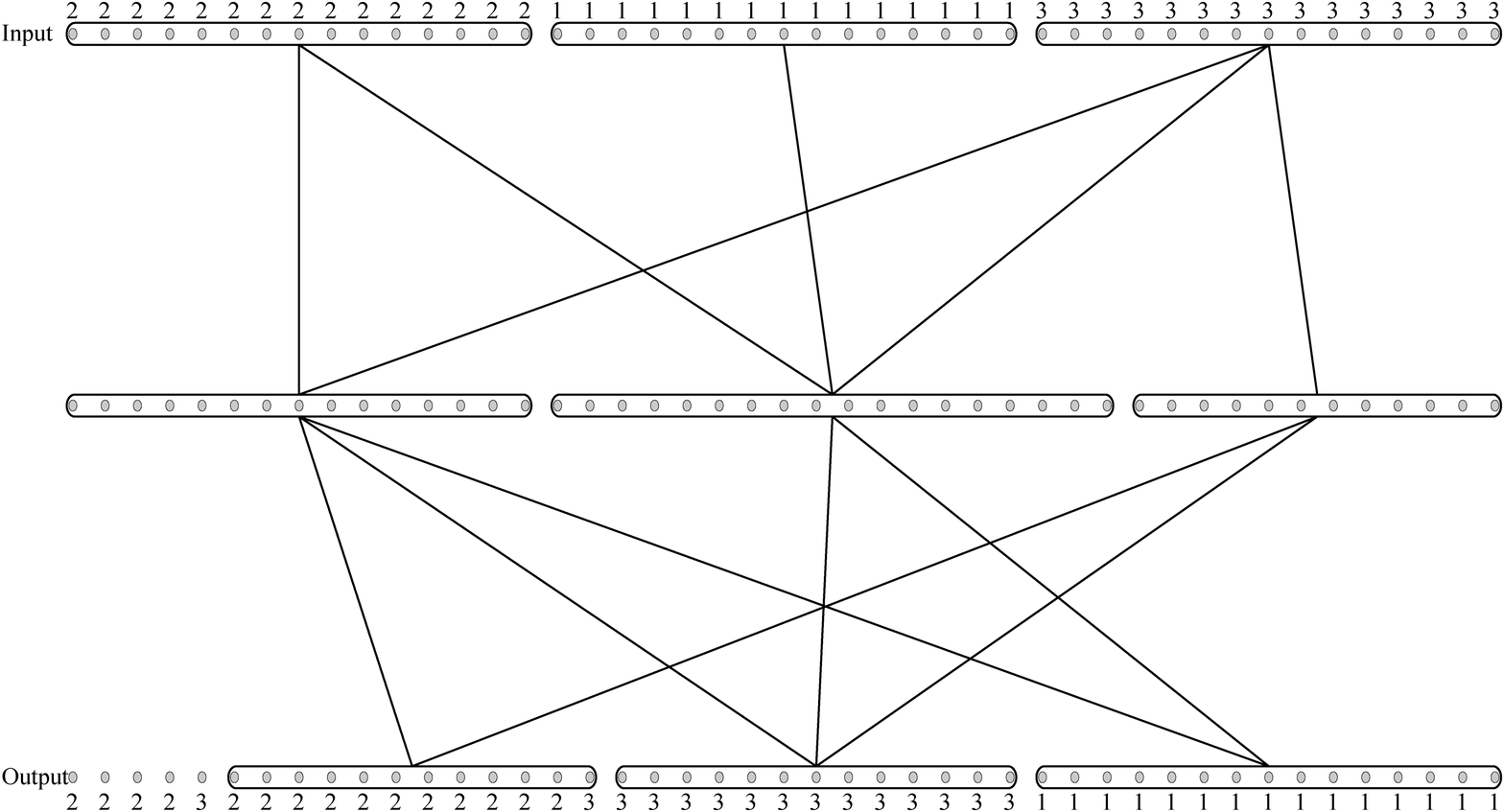}\hspace{3mm}
  \includegraphics[width=62mm]{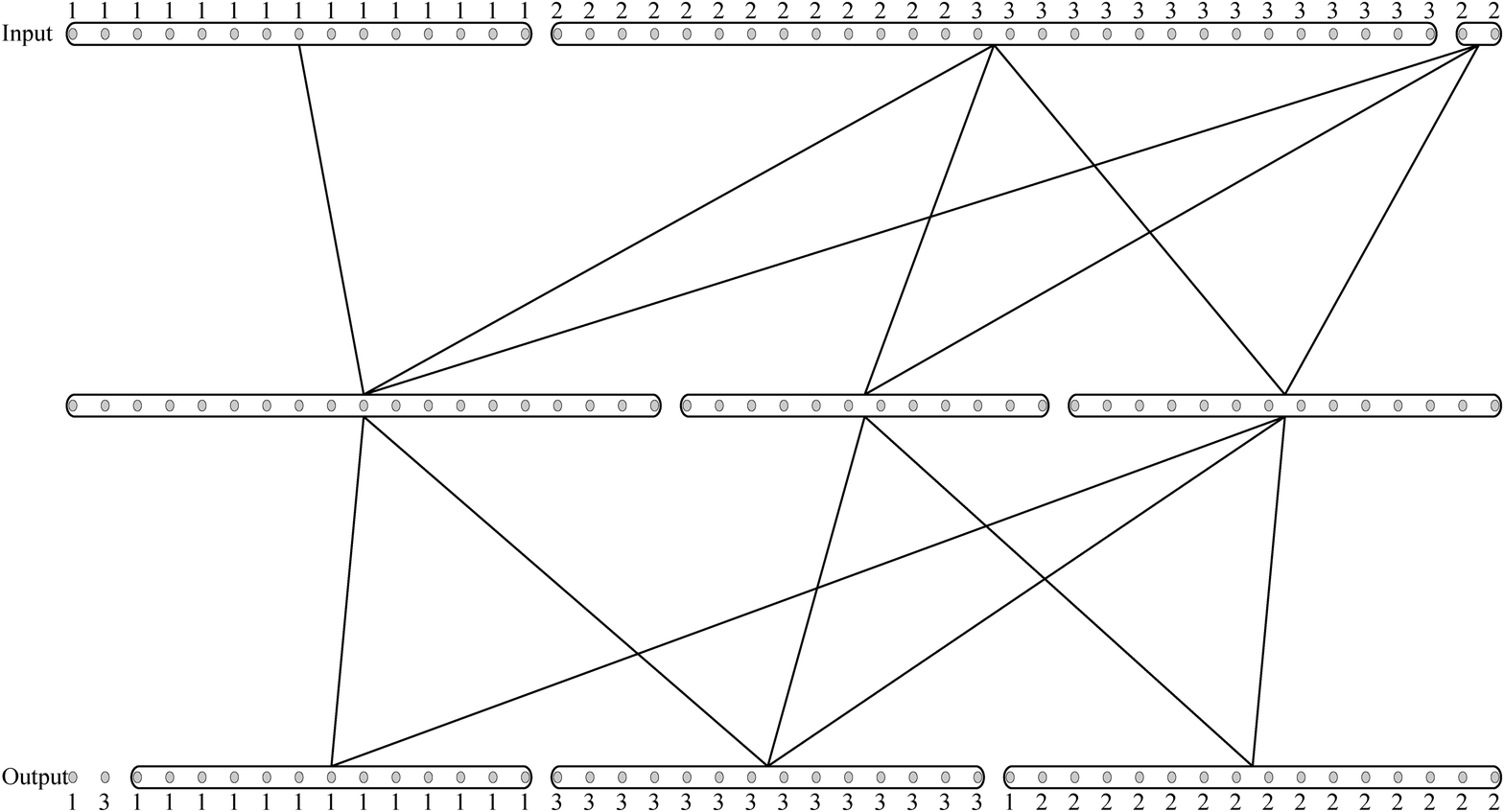}\vspace{1mm} \\
  \includegraphics[width=62mm]{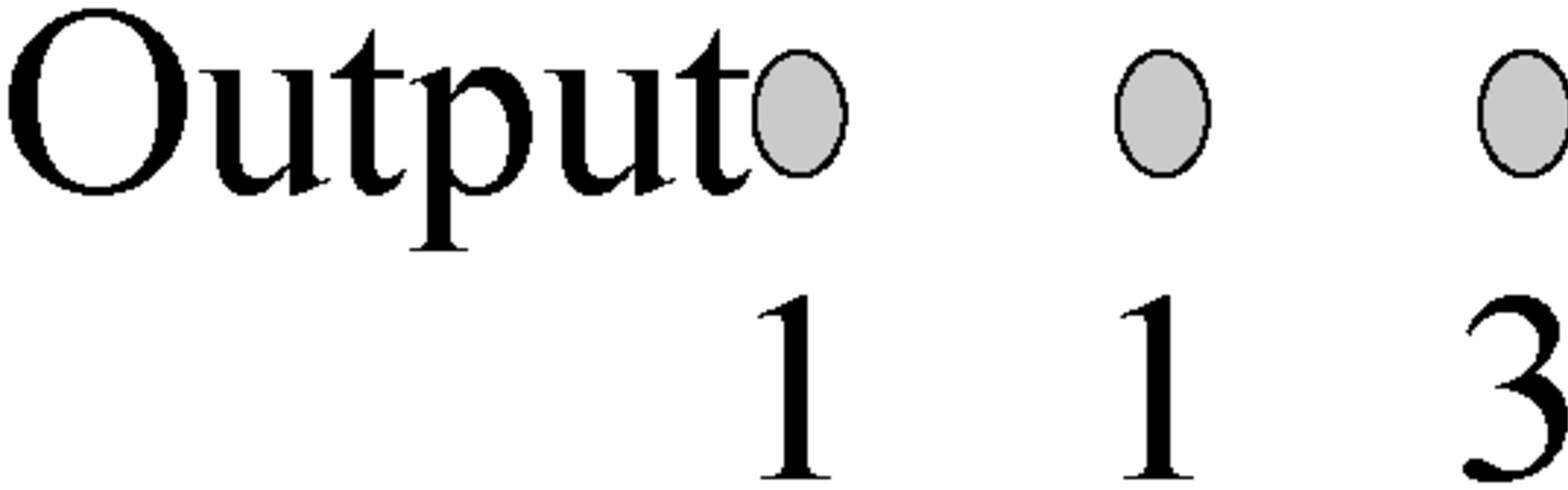}\hspace{3mm}
  \includegraphics[width=62mm]{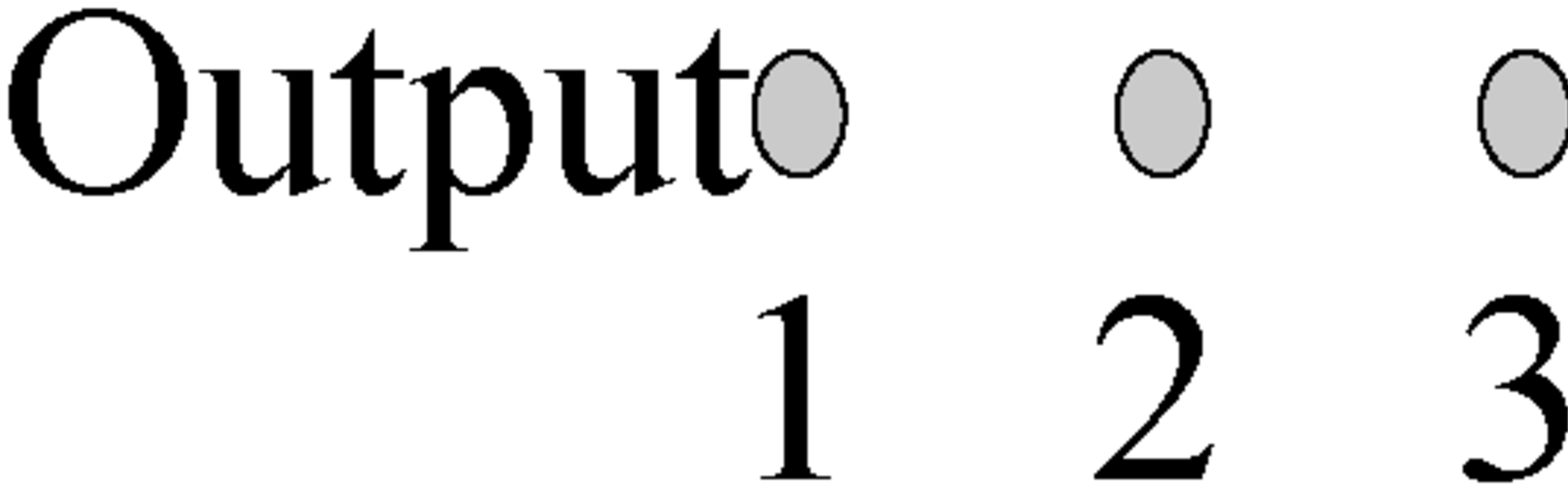}\vspace{-3mm}
  \caption{Extracted modular representation of neural network trained using the data with varying true structure. From the top left, the number of true additional bundled connections is $1,\ 2,\ \cdots,\ 10$.}
  \label{fig:exp1bc}
\end{figure*}

\subsection{Generalization error estimation from community structure} 
\label{sec:gee}

In general, a trained result of a layered neural network is affected by different hyperparameters and initial parameter values. Here, we show that the appropriateness of a trained result can be estimated from the extracted community structure by checking the correlation between the generalization error and the modularity \cite{Newman2004}.

The modularity is defined as a measure of the effectiveness of the community detection result, and it becomes higher with more intra-community connections and fewer inter-community connections. In other words, a network can be divided into different communities more clearly, as the modularity becomes higher. Let the number of communities in the network be $C$, and $\bar{A}=\{\bar{A}_{ij}\}$ be a $C\times C$ matrix whose element $\bar{A}_{ij}$ is the number of connections between communities $i$ and $j$, divided by the total number of connections in the network. The modularity $Q$ of the network is defined by
\begin{eqnarray*}
  Q=\sum_i \Bigl(\bar{A}_{ii}-\left\{\sum_j \bar{A}_{ij}\right\}^2\Bigr).
\end{eqnarray*}
This is a measure for verifying the community structure of assortative networks, so it cannot be applied directly to layered neural networks. In this paper, we define a modified adjacency matrix based on the original adjacency matrix of a layered neural network, and use it for measuring modularity. In the modified adjacency matrix, an element indexed by row $i$ and column $j$ represents the number of common units that connect with both the $i$-th and $j$-th units (Figure \ref{fig:modularitycalc}). We set the diagonal elements of modified adjacency matrix at $0$, resulting that there are no self-loops.

\begin{figure*}
  \centering
  \includegraphics[width=135mm]{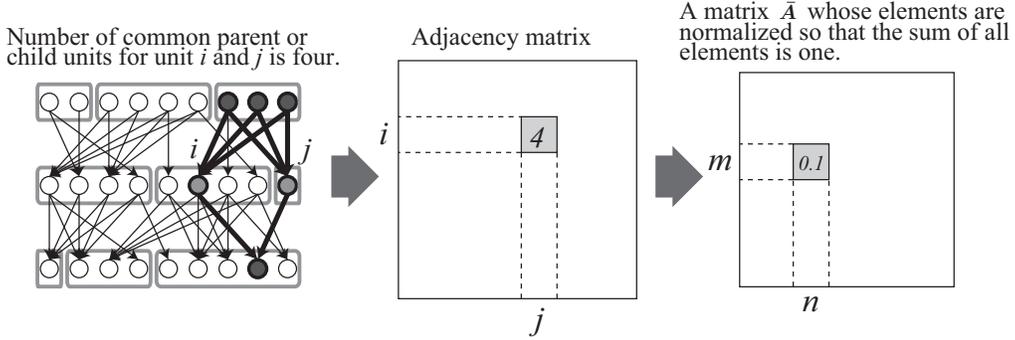}
  \caption{Left and center: method for defining the modified adjacency matrix of a layered neural network for calculating modularity. Right: A matrix $\bar{A}$ whose elements indicate the fraction of the connection weights between two communities in the network.}
  \label{fig:modularitycalc}
\end{figure*}

\subsubsection{Correlation between modularity and generalization error when using input data of mutually independent dimensions}

If the data consist of multiple independent dimensions like the synthetic data used in the experiment described in section \ref{sec:decomposition}, the generalization error is expected to be smaller when the weights of the connections between independent sets of input and output are trained to be smaller. Therefore, a higher modularity indicates a smaller generalization error.

\begin{figure*}
\begin{minipage}[t]{135mm}
  \centering
  \includegraphics[width=35mm]{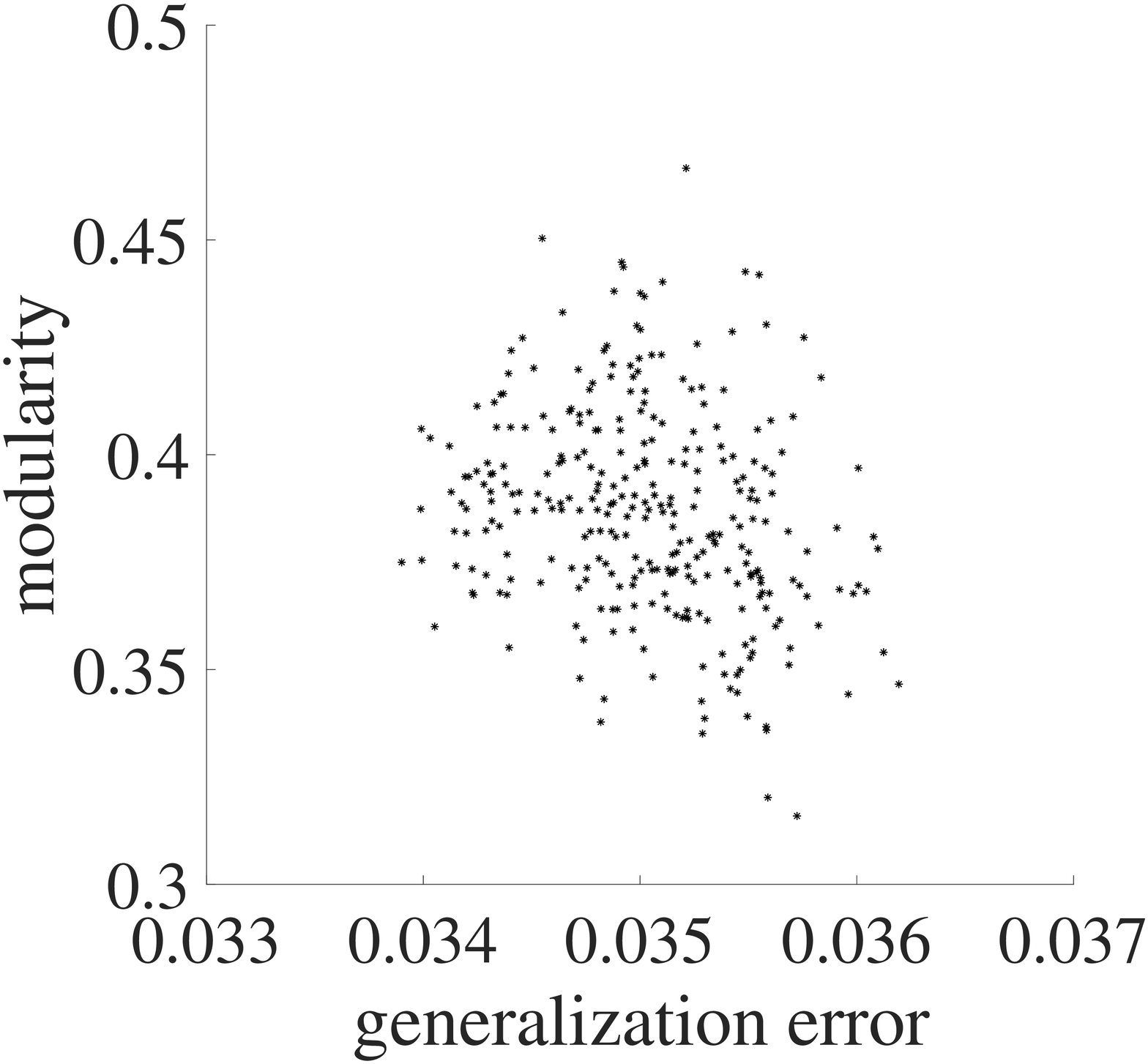}
  \includegraphics[width=35mm]{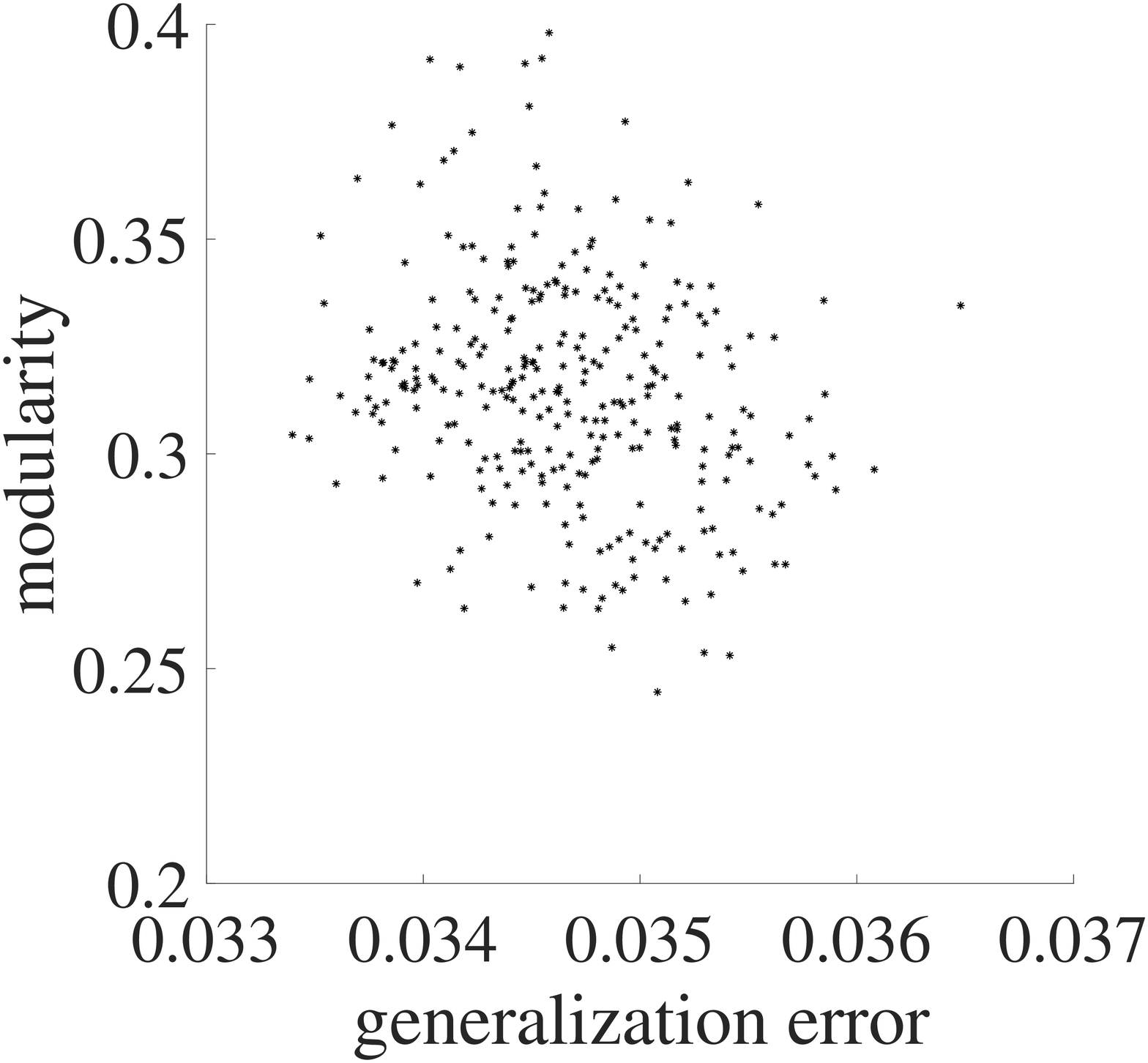}
  \includegraphics[width=35mm]{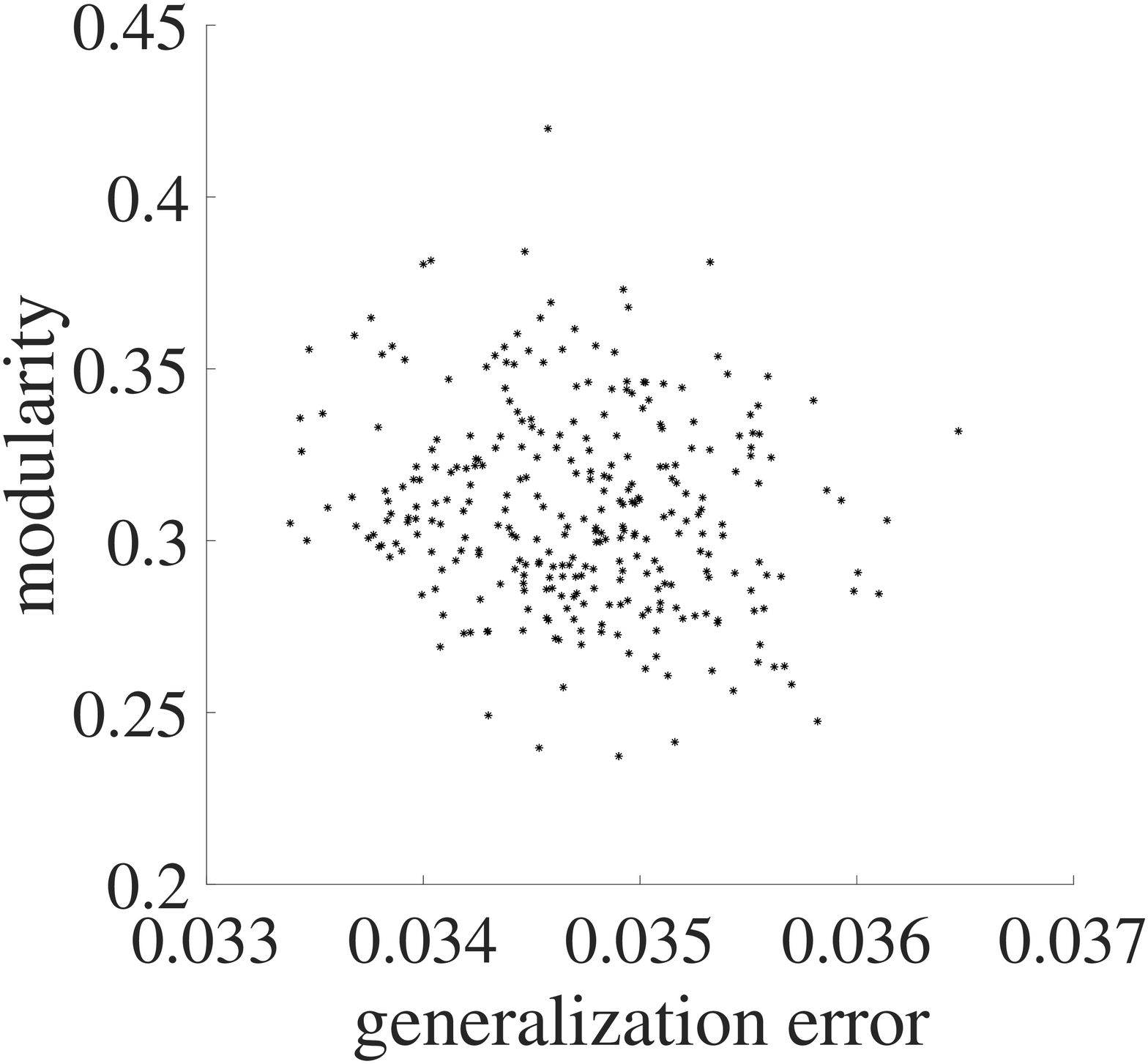}\\
  \includegraphics[width=35mm]{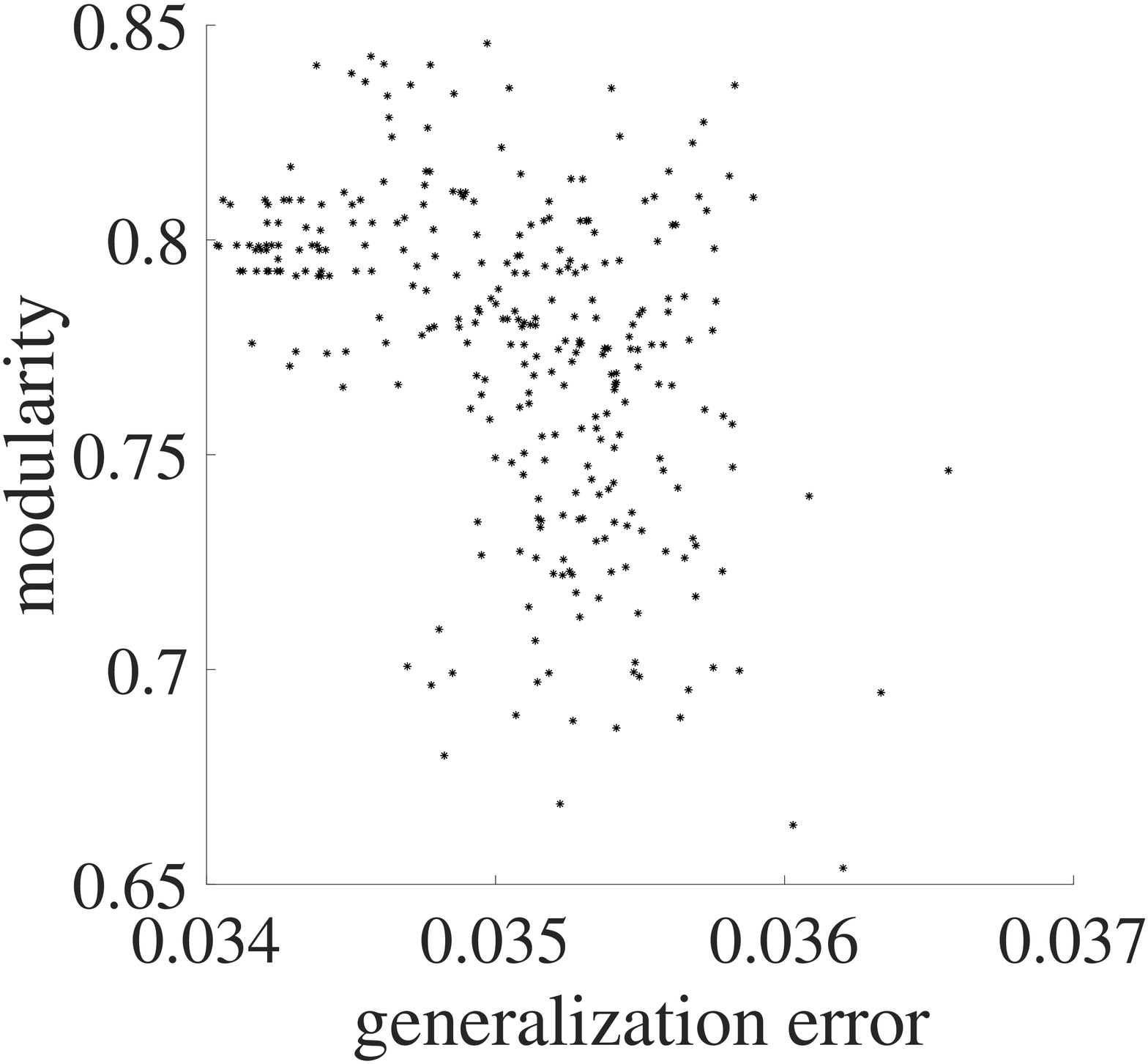}
  \includegraphics[width=35mm]{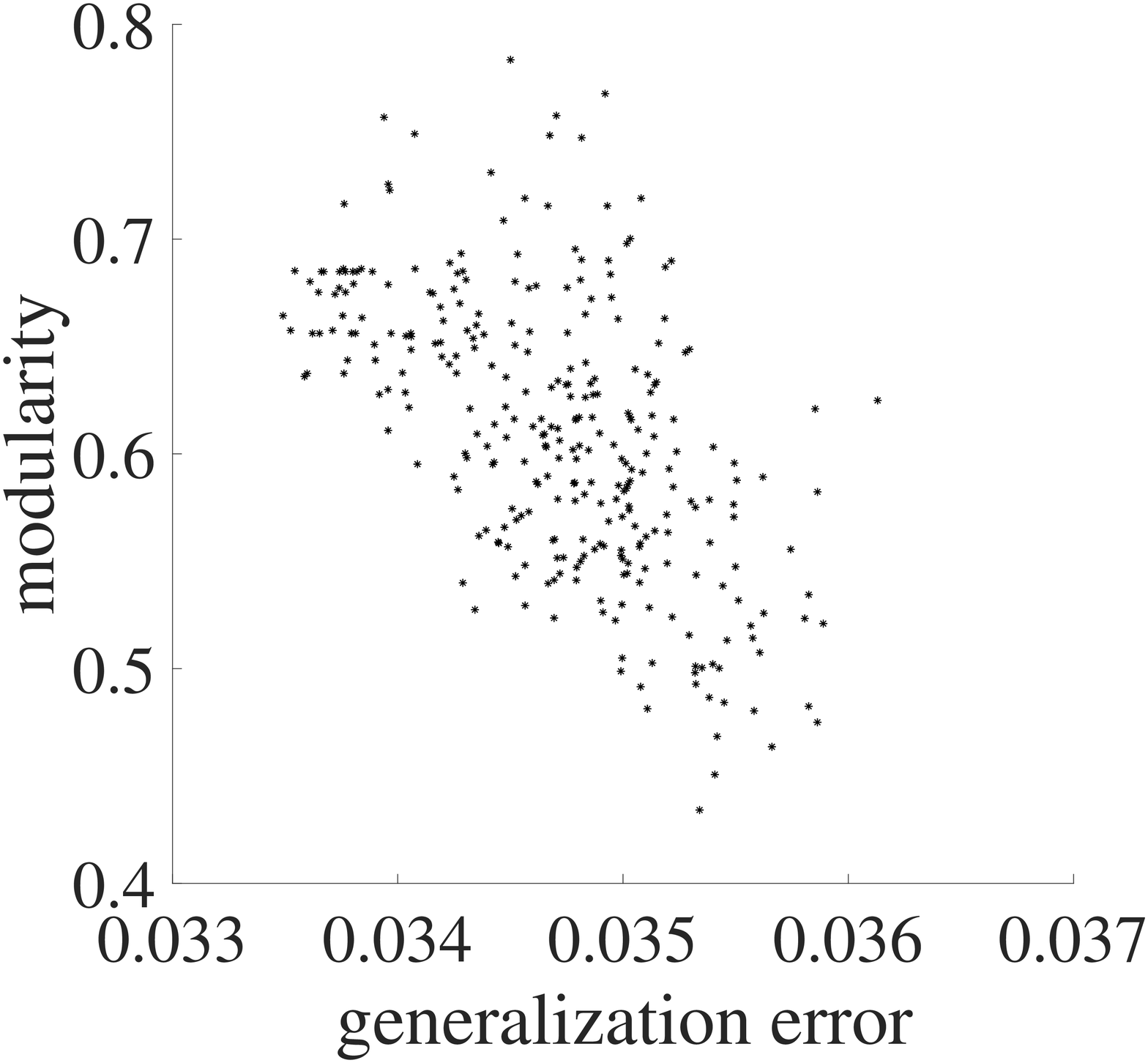}
  \includegraphics[width=35mm]{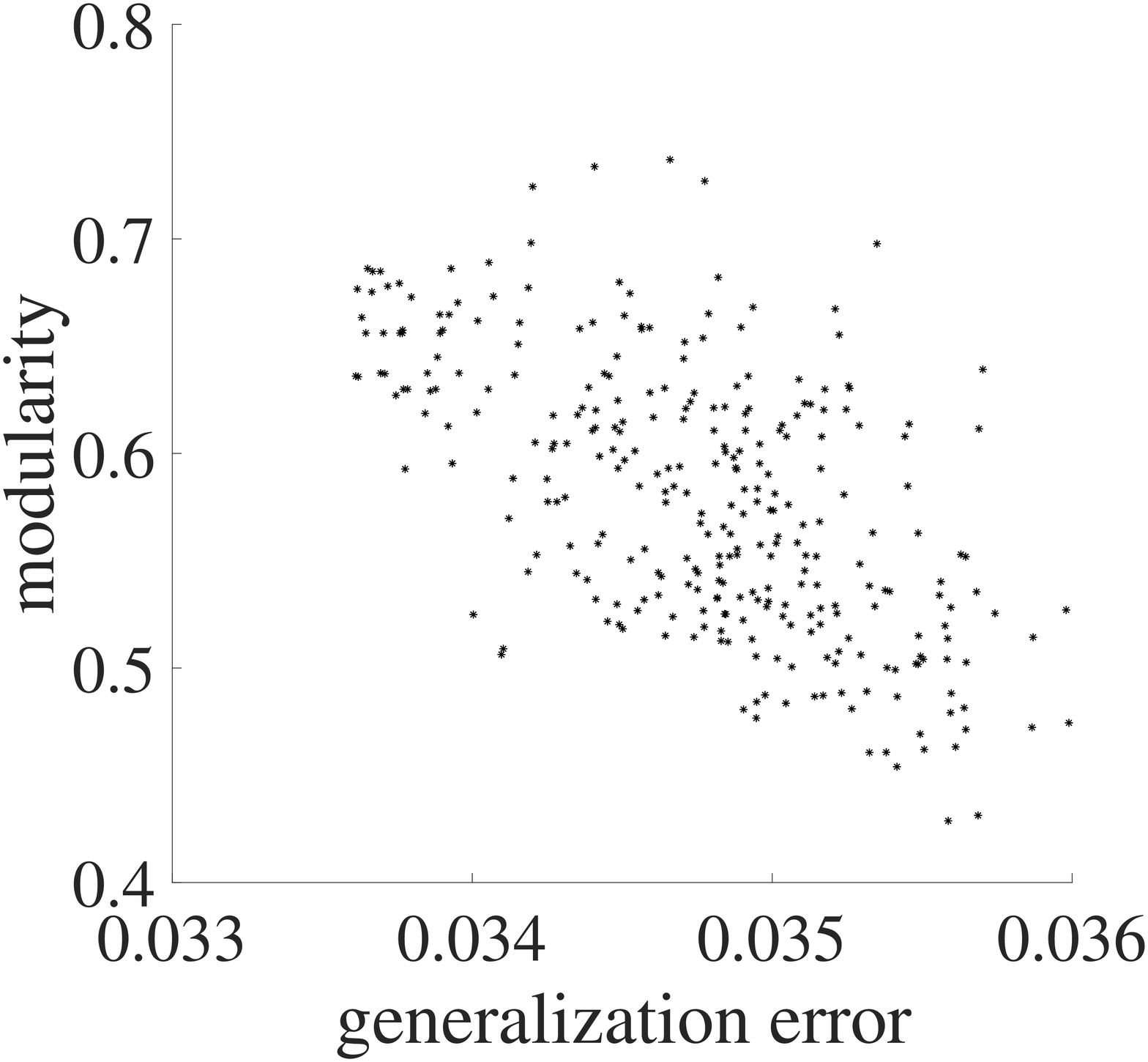}\\
  \includegraphics[width=35mm]{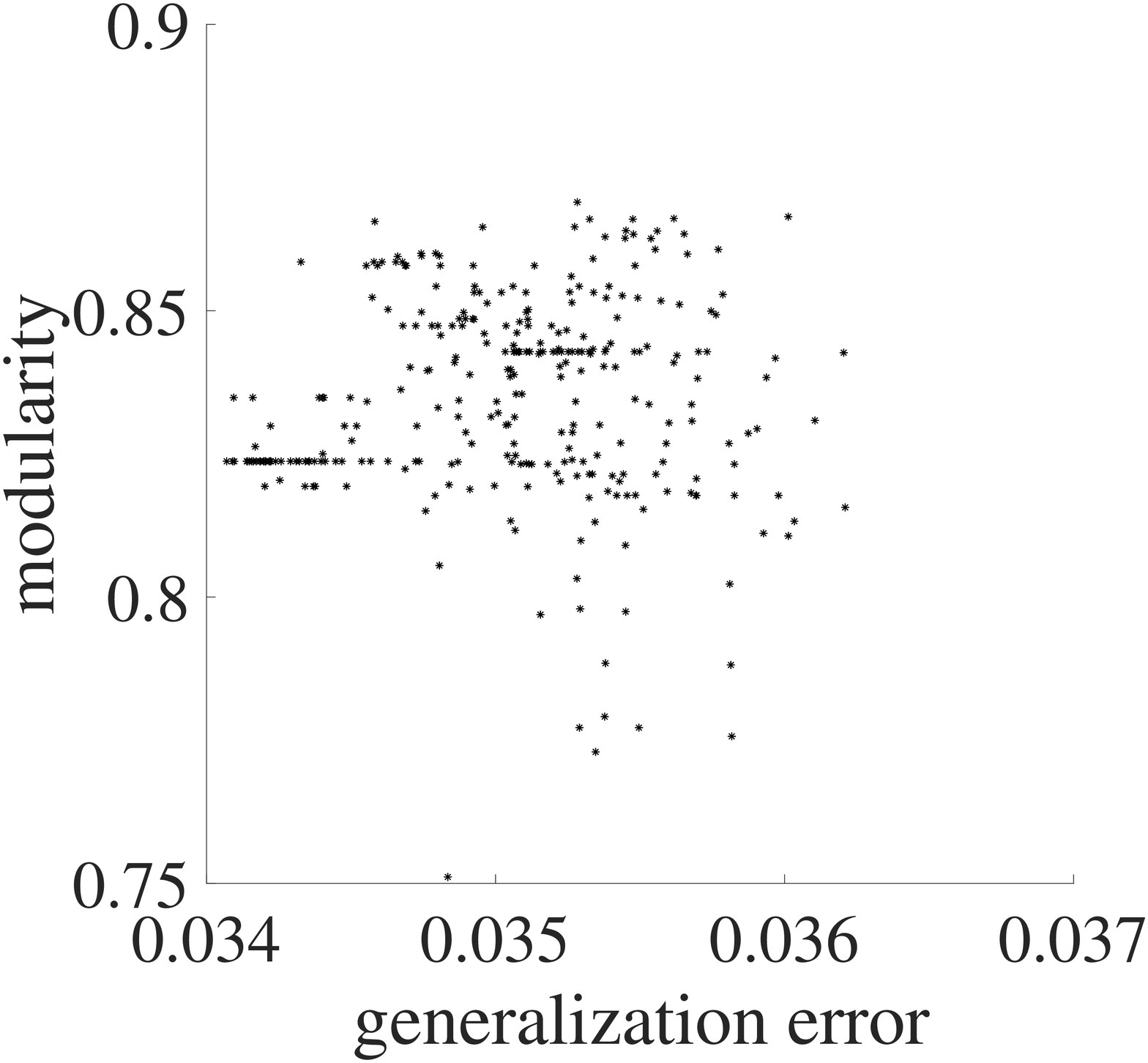}
  \includegraphics[width=35mm]{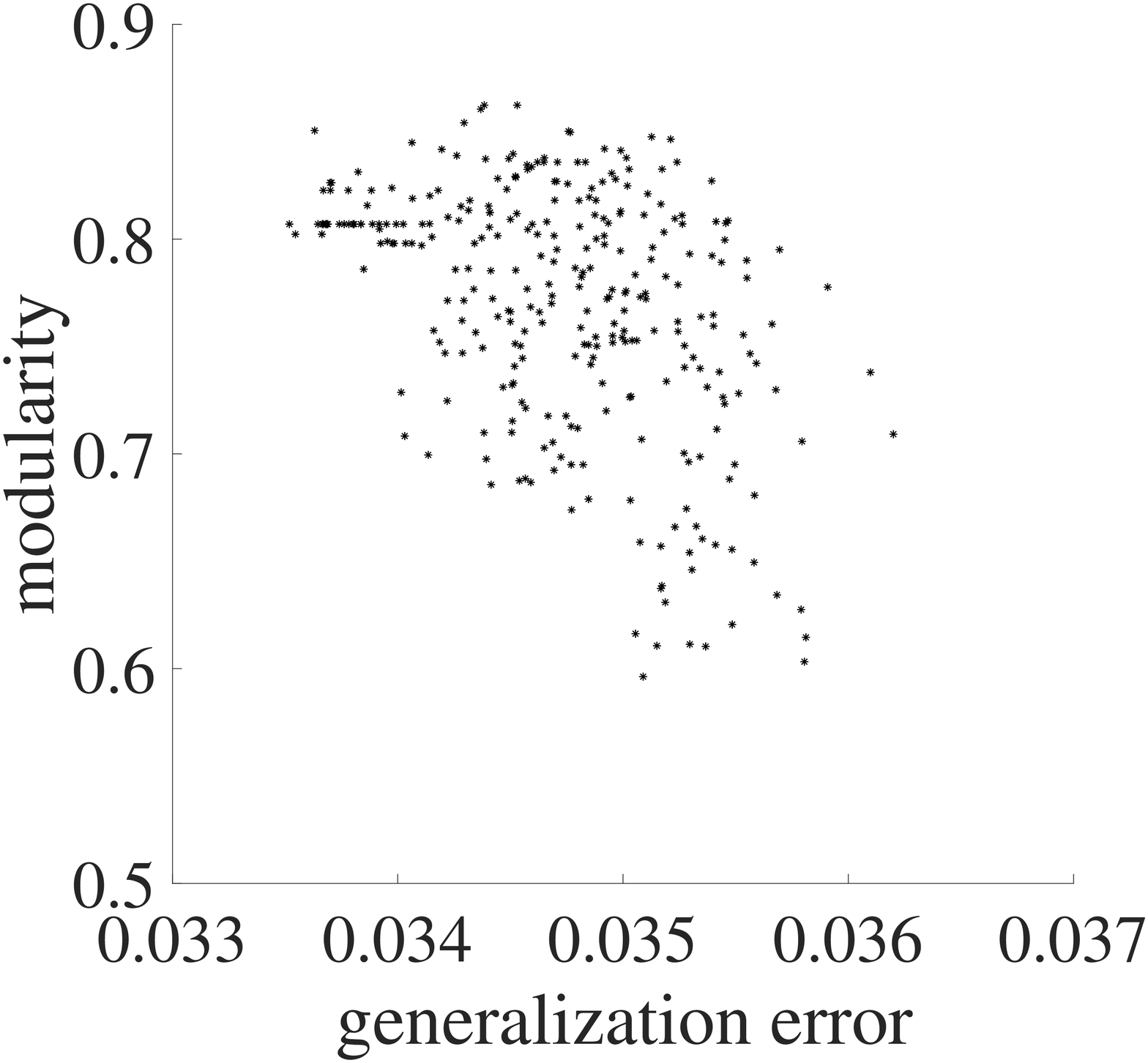}
  \includegraphics[width=35mm]{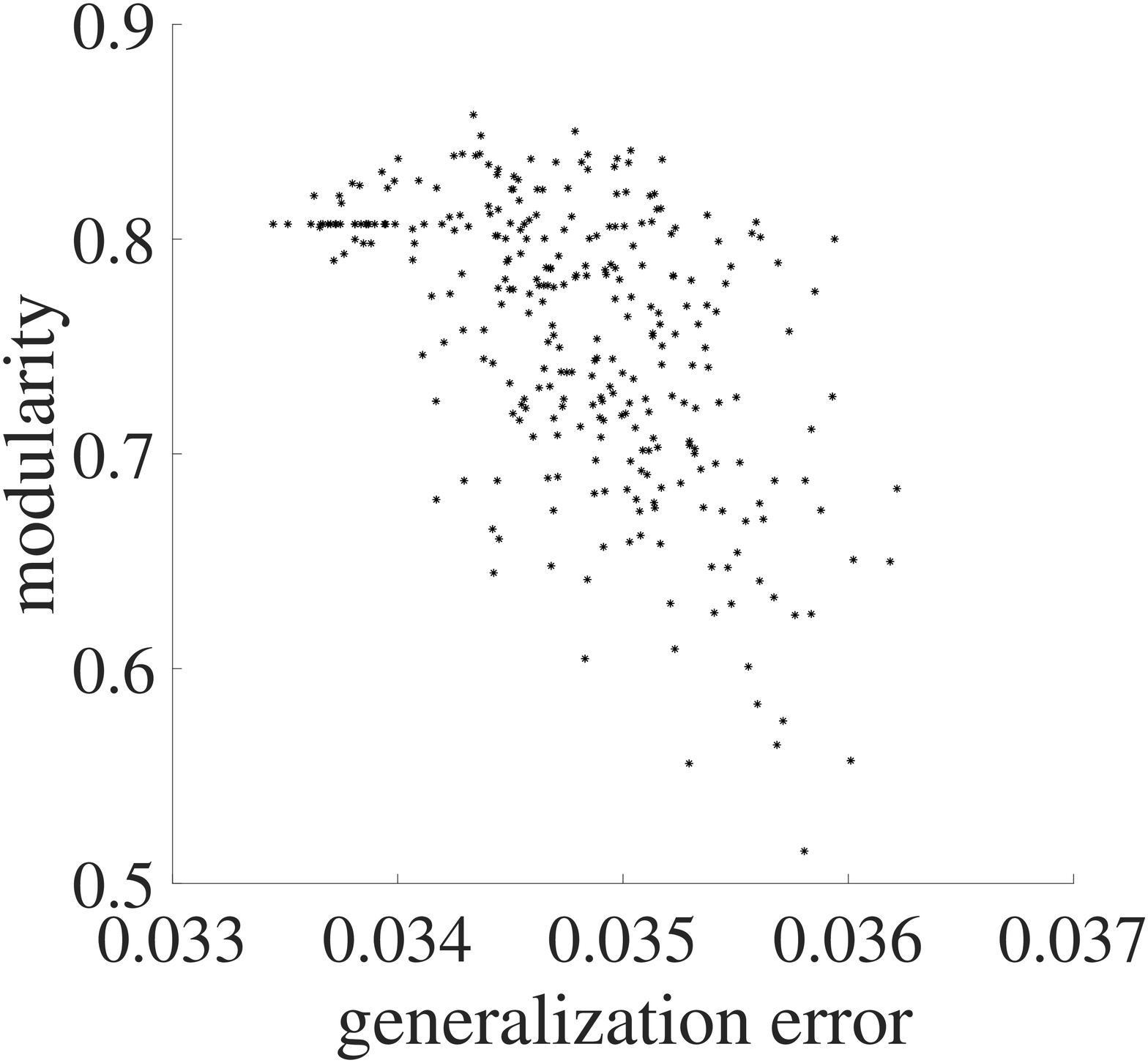}\vspace{-3mm}
  \caption{Relationship between generalization error and modularity. Community detection was performed with layered neural networks trained by multiple independent sets of data. The LASSO hyperparameter $\lambda$ is $1.0\times 10^{-5}$ (left), $1.0\times 10^{-6}$ (center) $1.0\times 10^{-7}$ (right). The weight removing hyperparameter  $\xi$ is $0.1$ (top), $0.3$ (center) $0.6$ (bottom). Better trained results were obtained (with smaller generalization errors) when the trained neural networks had clearer community divisions (with high modularity) except when $(\lambda ,\xi )=(1.0\times 10^{-5}, 0.6)$.}
  \label{fig:exp2}
\end{minipage}\\ \vspace{2mm}

\def\@captype{table}
\begin{minipage}[t]{130mm}
\centering
\scalebox{0.7}{
\begin{tabular}{c|c||c|c|c}\Hline
         \multicolumn{2}{c||}{} & \multicolumn{3}{c}{$\lambda$} \\ \cline{3-5}
         \multicolumn{2}{c||}{} &  \makebox[7em]{$1.0\times 10^{-5}$} & \makebox[7em]{$1.0\times 10^{-6}$} & \makebox[7em]{$1.0\times 10^{-7}$}\\ \hline \hline
         & $0.1$ & $R: -0.24, p: 2.1\times 10^{-5}$ & $R: -0.23, p: 5.0\times 10^{-5}$ & $R: -0.17, p: 3.6\times 10^{-3}$ \\ \cline{2-5}
 $\xi$ & $0.3$ & $R: -0.43, p: 3.5\times 10^{-15}$ & $R: -0.58, p: 1.5\times 10^{-28}$ & $R: -0.61, p: 1.3\times 10^{-31}$ \\ \cline{2-5}
         & $0.6$ & $R: 0.040, p: 0.49$ & $R: -0.44, p: 7.9\times 10^{-16}$ & $R: -0.55, p: 5.1\times 10^{-25}$ \\ \Hline
\end{tabular}
}\vspace{-2mm}
\tblcaption{The correlation $R$ and the p-value $p$ for the generalization errors and the modularities. }
\label{tab:cc}
\end{minipage}\\ \vspace{2mm}

\def\@captype{table}
\begin{minipage}[t]{130mm}
\centering
\scalebox{0.7}{
\begin{tabular}{c||c|c|c|c|c}\Hline
 $\alpha$ & $0$ & $0.1$ & $0.2$ & $0.3$ & $0.4$ \\ \hline \hline
 $R$ & $-0.58$ & $-0.72$ & $-0.58$ & $-0.62$ & $-0.56$ \\ \hline
 $p$ & $1.5\times 10^{-28}$ & $5.4\times 10^{-49}$ & $3.1\times 10^{-28}$ & $4.9\times 10^{-33}$ & $1.5\times 10^{-26}$ \\ \Hline
 $\alpha$ & $0.5$ & $0.6$ & $0.7$ & $0.8$ & $0.9$ \\ \hline \hline
 $R$ & $-0.59$ & $-0.33$ & $-0.32$ & $-0.014$ & $0.14$ \\ \hline
 $p$ & $2.6\times 10^{-29}$ & $6.2\times 10^{-9}$ & $1.5\times 10^{-8}$ & $0.81$ & $0.018$ \\ \Hline
\end{tabular}
}\vspace{-2mm}
\tblcaption{The correlation $R$ and the p-value $p$ for the generalization errors and the modularities with varying dependence between input data. The parameter $\alpha$ represents the strength of dependence.}
\label{tab:coralpha}
\end{minipage}
\end{figure*}

In the experiment, we iterated the neural network training and community detection from the trained network $300$ times, using $15$ dimensional data that are generated in the same way as the experiment described in section \ref{sec:independent}. 
The generalization error and modularity results for nine pairs of hyperparameters $\{\lambda, \xi\}$ are shown in Figure \ref{fig:exp2}, where $\lambda$ is the LASSO hyperparameter and $\xi$ is the weight removing hyperparameter. For the smaller $\lambda$ and $\xi$, the overall modularities were lower, which indicates that there were more connections between mutually independent neural networks. It was experimentally shown for some hyperparameters that better trained results were obtained (with smaller generalization errors) when the trained neural networks had clearer community divisions (with higher modularity). Table \ref{tab:cc} shows the correlations and the p-values for given $\{\lambda, \xi\}$. 

\subsubsection{Correlation between modularity and generalization error when using input data of correlated dimensions}

We also evaluated the relationship between modularity and generalization error, when there is dependence between dimensions of input data. Values of each dimensions of input data were given by
\begin{eqnarray}
  x^n_j = \begin{cases}
    z^n_j & (1 \leq j \leq 5), \\
    (1-\alpha)\times z^n_j +\alpha \times z^n_{j-5} & (6 \leq j \leq 10),\\
    (1-\alpha)\times z^n_j +\alpha \times z^n_{j-10} & (otherwise),
  \end{cases}  
  \label{eq:alpha}
\end{eqnarray}
where $\alpha$ is a control parameter of dependence in input data and $z^n_j\overset{\text\small\rm{i.i.d.}}{\sim}\mathcal{N}(0,3)$. 

We varied the parameter $\alpha$ from $0$ to $0.9$, and checked the correlation between modularity and generalization error for each setting. Here, we set the hyperparameters at $(\lambda, \xi)=(1.0\times 10^{-6},0.3)$, and used the same experimental settings other than the generation method of input data and the hyperparameters. 
Table \ref{tab:coralpha} shows the correlations and the p-values for varying control parameter $\alpha$. 
It was shown that there was a correlation between generalization error and modularity when the dependence between input data was not so strong. Roughly, for the larger $\alpha$, modularity and generalization error have the weaker correlation. This is because the three sets of input data were not necessarily be decomposed into different communities, even if the training result of neural network was appropriate. If the input data contain strongly dependent dimensions, it is necessary to remove such dimensions in advance, for analyzing the extracted modular structure properly. To construct a method for improving input data appropriately based on their dependency is a future work. 

\subsection{Knowledge discovery from modular representation} 
\label{sec:kd}

In order to show that the modular representation extracts the global structure of a trained neural network, we applied the proposed method to a neural network trained with practical data. We used data that represent the characteristics of each municipality in Japan \cite{estat}. The characteristics shown in Table \ref{tab:notationsdata} were used as the input and output data, and the data of municipalities that had any missing value were removed. There were $1905$ data, and we divided them into $952$ training data and $953$ test data. Before the neural network was trained, all the dimensions for all sets of data were converted through the function of $\log (1+x)$, because the original data are highly biased. The results are shown in Figure \ref{fig:exp3data}.

\begin{figure*}
  \centering
  \includegraphics[width=145mm]{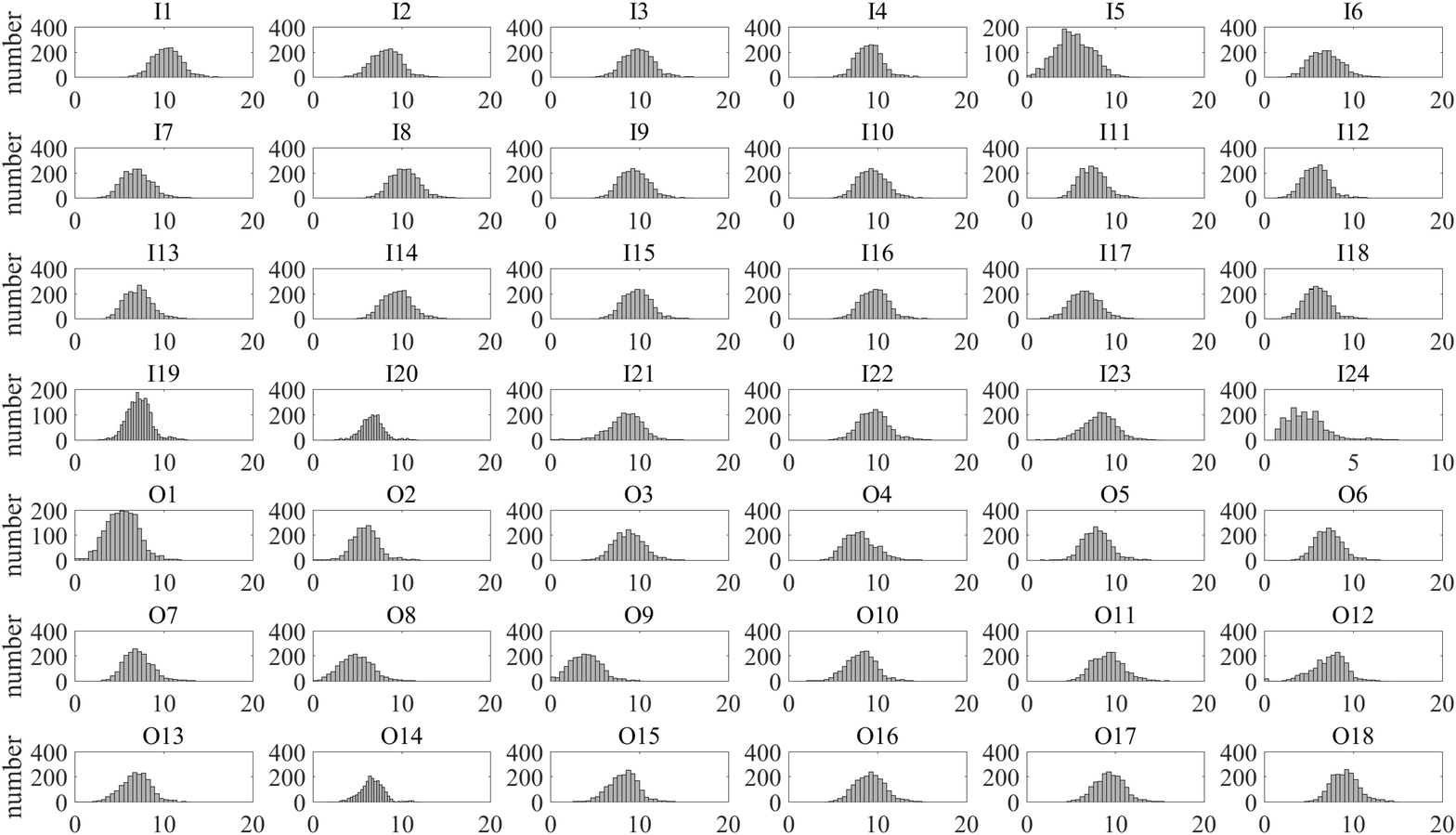}\vspace{-5mm}
  \caption{Histogram of each dimension of data that contain the characteristics of each municipality in Japan. The data were converted through the function of $\log (1+x)$. The notations are shown in Table \ref{tab:notationsdata}.}
  \label{fig:exp3data}
\end{figure*}

We iterated the neural network training and community detection from the trained network $300$ times, using the above data. The trained neural network and the modular representation with minimum generalization error are shown in Figures \ref{fig:exp3o} and \ref{fig:exp3m}, respectively. The correlation $R$ between modularity and generalization error was $-0.028$. Figure \ref{fig:exp3m} shows, for example, that the number of births, deaths, marriages, divorces, people who engage in secondary industry work, and unemployed people (A4) were inferred from the population of transference, the number of out-migrants, households, secondary industry establishments and so on (A1).

\begin{figure*}
  \centering
  \includegraphics[width=135mm]{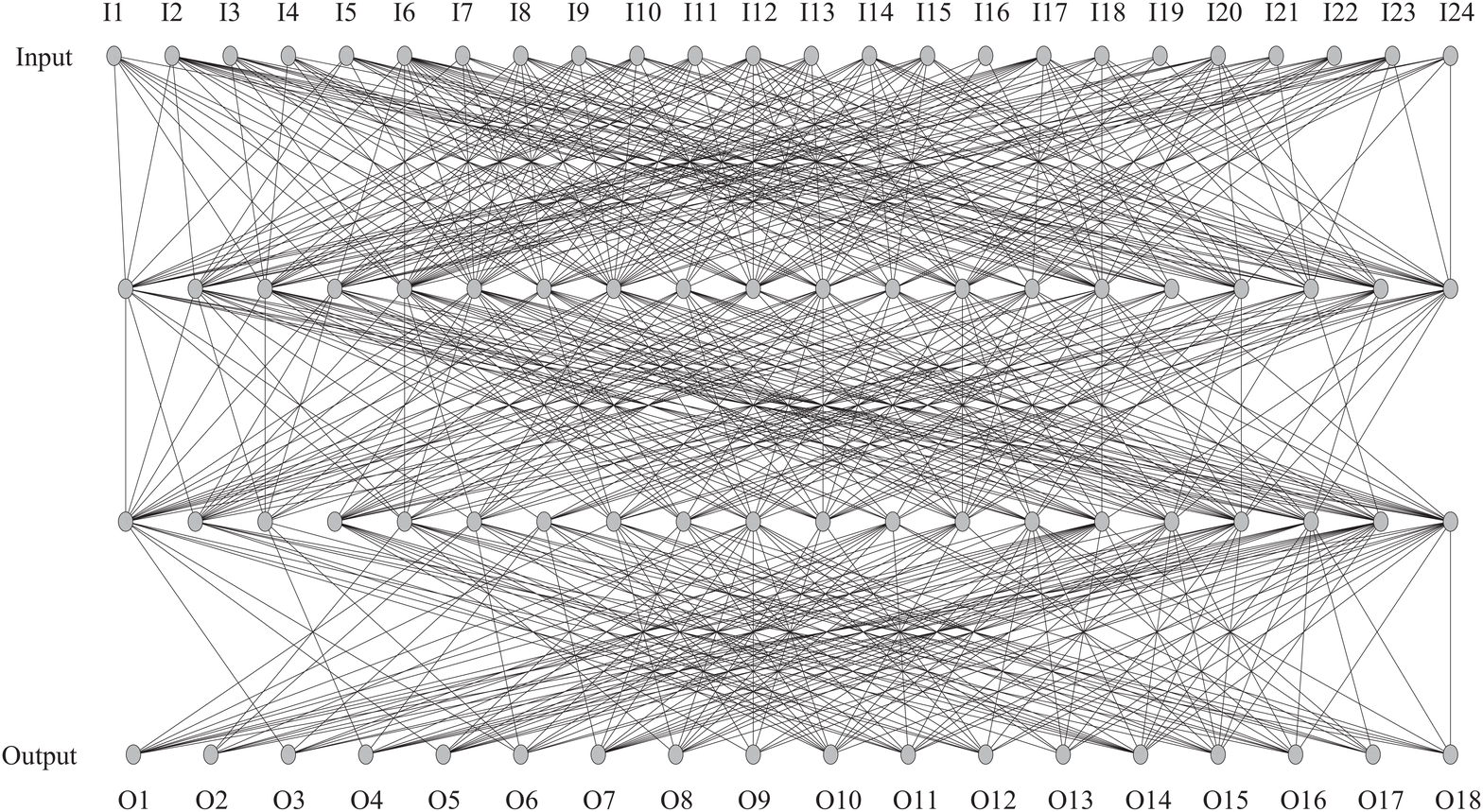}\vspace{-5mm}
  \caption{Trained neural network for practical data. The input and output data notations are shown in Table \ref{tab:notationsdata}.}\vspace{6mm}
  \label{fig:exp3o}
  \centering
  \includegraphics[width=135mm]{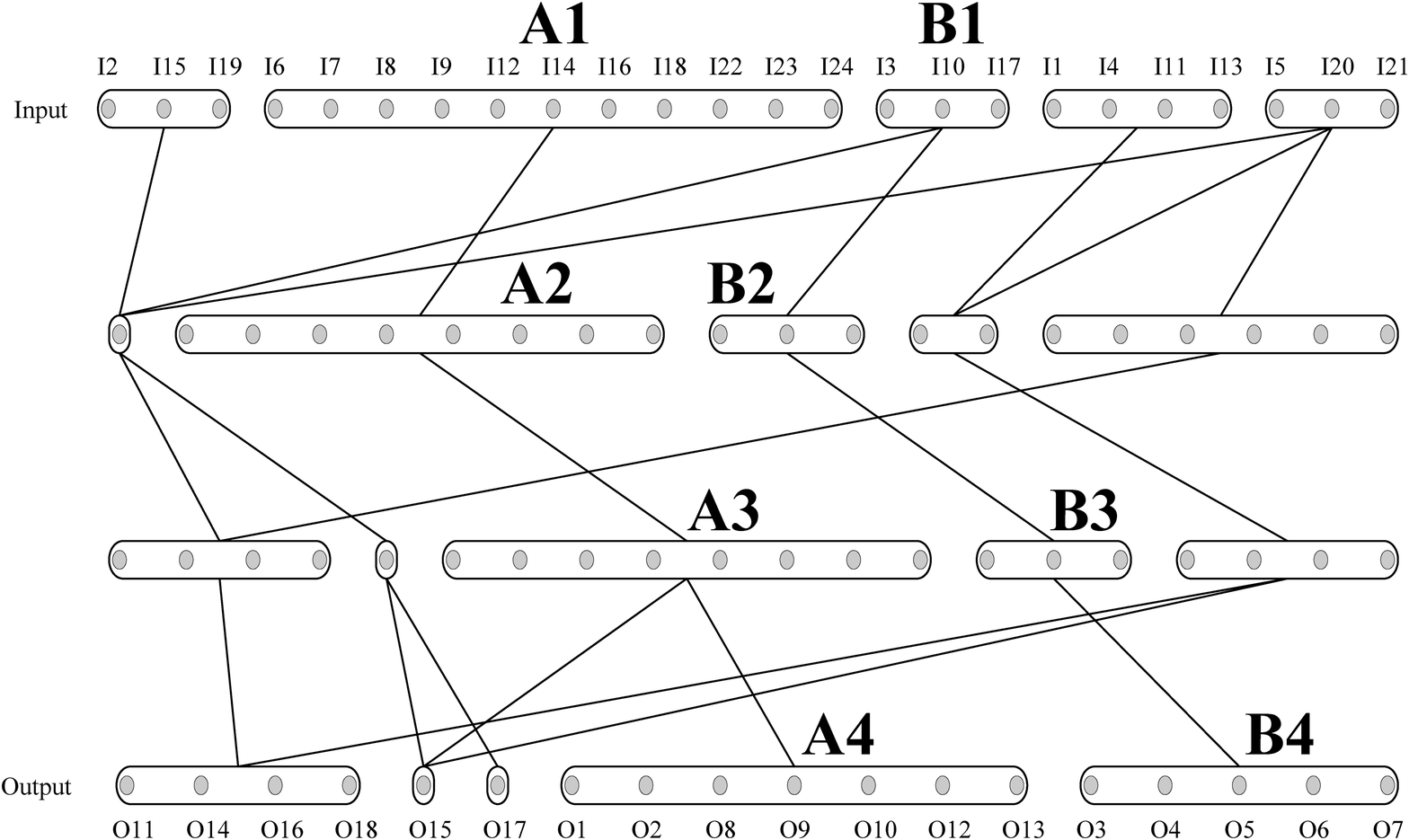}\vspace{-1mm}
  \caption{Extracted modular representation of trained neural network. This figure shows, for example, that the number of births, deaths, marriages, divorces, people who engage in secondary industry work, and unemployed people (A4) were inferred from the population of transference, the number of out-migrants, households, secondary industry establishments and so on (A1).}
  \label{fig:exp3m}
\end{figure*}

From the extracted modular representation, we found not only the grouping of the input and output units, but also the relational structure between the communities of the input, output and hidden layers. For instance, the third community from the right in the depth $2$ layer (B2) and the second community from the right in the depth $3$ layer (B3) only connected to partial input and output units: they were used only for inferring the number of nuclear family households, single households, nuclear family households with members aged 65 and older, and elderly households (B4), from the population between 15 and 64 years of age, the number of general households and executives (B1).

\section{Discussion} 
\label{sec:discussion}

In this section, we discuss the proposed algorithm from four viewpoints, the community detection method, the validation of the extracted result, the scalability of our method, and the application. 

Firstly, to extract a modular representation from a trained neural network, we employed a basic iterative community detection method for each layer. It is possible to modify this method, for example, by using the weights of connections or the connections in further layers. Utilizing the output of each unit might also improve preciseness of the community detection result. 
In general, connection weights of a neural network can be trained the more appropriately with the more training data, resulting in the more valid modular representation extraction. For a given set of data, the optimal hyperparameters $\lambda$ and $\xi$ in the sense of the smallest generalization error can be found by cross-validation, but it takes heavy computational cost. To seek a method for determining the sufficient number of training data, or for optimizing community detection methods and hyperparameters according to the task with small computational complexity is future work.

Secondly, knowledge discovered from a modular representation depends on both the data and the analyst who utilizes the proposed method. For quantitative evaluation of extracted modular structure, statistical hypothesis test or statistical model selection method is required. However, such method has not been constructed in the field of network analysis, so it is also an important mathematical task in the future. Experimentally, there are both sensitive and robust communities which do and do not depend on them. Therefore, it becomes important to separate the essential results from fluctuations. We anticipate that our method will form the basic analytic procedure of such a study. 

Thirdly, in this paper, we experimentally evaluated the relationship between modularity and generalization error. It is well known that a community detection technique can be employed for large size networks. The analysis of larger datasets with higher dimensions would provide further information on layered neural networks. For such large datasets, it would also be important to evaluate the effectiveness of parallel computation, using the independent neural networks extracted with our proposed method. 

And lastly, our proposed community detection method can be used for various applications, such as neural network compression. For instance, it would be possible to use modularity index of a resulting community structure as a penalty term for neural network regularization. 

\section{Conclusion} 
\label{sec:conclusion}

Layered neural networks have achieved a significant improvement in terms of classification or regression accuracy over a wide range of applications by their ability to capture the complex hidden structure between input and output data. However, the discovery or interpretation of knowledge using layered neural networks has been difficult, since its internal representation consists of many nonlinear and complex parameters. 

In this paper, we proposed a new method for extracting a modular representation of a trained layered neural network. The proposed method detects communities of units with similar connection patterns, and determines the relational structure between such communities. We demonstrated the effectiveness of the proposed method experimentally in three applications. (1) It can decompose a layered neural network into a set of small independent networks, which divides the problem and reduces the computation time. (2) The trained result can be estimated by using a modularity index, which measures the effectiveness of a community detection result. And (3) providing the global relational structure of the network would be a clue to discover knowledge from a trained neural network. 

\section*{Appendix} 
\label{sec:appendix}

Table \ref{tab:notationsdata} shows the notations of the data \cite{estat} used in the experiment described in section \ref{sec:kd}. The experimental settings of the parameters are shown in Table \ref{tab:notations}. 

\begin{table}[h]
\caption{Notations of the data.}
  \centering
  \footnotesize
  \begin{tabular}{c|p{5cm}||c|p{5cm}} \Hline
    name & \parbox{5cm}{\strut{}meaning\strut} & name & \parbox{5cm}{meaning\strut} \\ \hline \hline
    I1 & \parbox{5cm}{\strut{}total population\strut} & 
	I22 & \parbox{5cm}{number of employed people by business location\strut} \\ \hline
    I2 & \parbox{5cm}{\strut{}population under 15 years of age\strut} & 
	I23 & \parbox{5cm}{number of commuters from other municipalities\strut} \\ \hline
    I3 & \parbox{5cm}{\strut{}population between 15 and 64 years of age\strut} & 
	I24 & \parbox{5cm}{number of post offices\strut} \\ \hline
    I4 & \parbox{5cm}{\strut{}population aged 65 and older\strut} & 
	O1 & \parbox{5cm}{number of births\strut} \\ \hline
    I5 & \parbox{5cm}{\strut{}foreign population\strut} & 
	O2 & \parbox{5cm}{number of deaths\strut} \\ \hline
    I6 & \parbox{5cm}{\strut{}population of transference\strut} & 
	O3 & \parbox{5cm}{number of nuclear family households\strut} \\ \hline
    I7 & \parbox{5cm}{\strut{}number of out-migrants\strut} & 
	O4 & \parbox{5cm}{number of single households\strut} \\ \hline
    I8 & \parbox{5cm}{\strut{}daytime population\strut} & 
	O5 & \parbox{5cm}{number of nuclear family households with members aged 65 and older\strut} \\ \hline
    I9 & \parbox{5cm}{\strut{}number of households\strut} & 
	O6 & \parbox{5cm}{number of elderly couple households\strut} \\ \hline
    I10 & \parbox{5cm}{\strut{}number of general households\strut} & 
	O7 & \parbox{5cm}{number of elderly single households\strut} \\ \hline
    I11 & \parbox{5cm}{\strut{}number of establishments\strut} & 
	O8 & \parbox{5cm}{number of marriages\strut} \\ \hline
    I12 & \parbox{5cm}{\strut{}number of secondary industry establishments\strut} & 
	O9 & \parbox{5cm}{number of divorces\strut} \\ \hline
    I13 & \parbox{5cm}{\strut{}number of tertiary industry establishments\strut} & 
	O10 & \parbox{5cm}{number of secondary industry workers\strut} \\ \hline
    I14 & \parbox{5cm}{\strut{}number of workers\strut} & 
	O11 & \parbox{5cm}{number of tertiary industry workers\strut} \\ \hline
    I15 & \parbox{5cm}{\strut{}labor force population\strut} & 
	O12 & \parbox{5cm}{number of employees in manufacturing industry\strut} \\ \hline
    I16 & \parbox{5cm}{\strut{}number of employed people\strut} & 
	O13 & \parbox{5cm}{number of unemployed people\strut} \\ \hline
    I17 & \parbox{5cm}{\strut{}number of executives\strut} & 
	O14 & \parbox{5cm}{number of primary industry employees\strut} \\ \hline
    I18 & \parbox{5cm}{\strut{}number of employees with employment\strut} & 
	O15 & \parbox{5cm}{number of secondary industry employees\strut} \\ \hline
    I19 & \parbox{5cm}{\strut{}number of employees without employment\strut} & 
	O16 & \parbox{5cm}{number of tertiary industry employees\strut} \\ \hline
    I20 & \parbox{5cm}{\strut{}number of family workers\strut} & 
	O17 & \parbox{5cm}{number of employees\strut} \\ \hline
    I21 & \parbox{5cm}{\strut{}number of commuters to other municipalities\strut} & 
	O18 & \parbox{5cm}{number of employed workers in their municipalities\strut} \\ \Hline
  \end{tabular}
\label{tab:notationsdata}
\end{table}
\begin{table}[h]
\caption{The experimental settings of the parameters.}
  \centering
  \scalebox{0.85}{
  \begin{tabular}{c|p{6cm}|c|c|c} \Hline
    name & meaning & Exp.1 & Exp.2 & Exp.3 \\ \hline \hline
    $a_1$ & \parbox{6cm}{\strut{}mean iteration number of neural network training per set of data\strut} & $4000$ & \multicolumn{2}{c}{$2000$} \\ \hline
    $n$ & number of training data sets & $3000$ & $500$ & $952$\\ \hline
    $m$ & number of test data sets & $0$ & $500$ & $953$\\ \hline
    $\{l_d\}$ & \parbox{6cm}{\strut{}number of units in depth $d$ layer\strut} & $\{45,45,45\}$ & $\{15,15,15\}$ & $\{24,20,20,18\}$\\ \hline
    $D$ & \parbox{6cm}{\strut{}number of layers including input, hidden, and output layers\strut} & \multicolumn{2}{c|}{$3$} & $4$\\ \hline
    $\lambda$ & hyperparameter of LASSO & $1.0\times 10^{-6}$ & * & $1.0\times 10^{-7}$\\ \hline
    $\epsilon$ & \parbox{6cm}{\strut{}hyperparameter for convergence of neural network\strut} & \multicolumn{3}{c}{$0.001$}\\ \hline
    $\xi$ & weight removing hyperparameter & $0.3$ & * & $0.5$\\ \hline
    $C$ & \parbox{6cm}{\strut{}number of communities per layer\strut} & \multicolumn{2}{c|}{$3$} & $5$\\ \hline
    Method & \parbox{6cm}{\strut{}method for defining bundled connections\strut} & \multicolumn{2}{c|}{$2$} & $3$\\ \hline
    $\zeta$ & \parbox{6cm}{\strut{}threshold for defining bundled connections\strut} & \multicolumn{3}{c}{$0.3$}\\ \hline
    $x_{\mathrm{min}}$ & \parbox{6cm}{\strut{}minimum value of normalized input data\strut} & \multicolumn{2}{c|}{$-3$} & $-1$\\ \hline
    $x_{\mathrm{max}}$ & \parbox{6cm}{\strut{}maximum value of normalized input data\strut} & \multicolumn{2}{c|}{$3$} & $1$\\ \Hline
  \end{tabular}
  }
*: The nine parameters shown in the caption of Figure \ref{fig:exp2} are used.
\label{tab:notations}
\end{table}

\section*{Acknowledgements} 

We would like to thank Akisato Kimura for his helpful comments on this paper.


\begin{thebibliography}{10}
\expandafter\ifx\csname url\endcsname\relax
  \def\url#1{\texttt{#1}}\fi
\expandafter\ifx\csname urlprefix\endcsname\relax\def\urlprefix{URL }\fi
\expandafter\ifx\csname href\endcsname\relax
  \def\href#1#2{#2} \def\path#1{#1}\fi

\bibitem{Azam2000}
{Azam, F}. Biologically inspired modular neural networks, Tech. rep. (2000).

\bibitem{Bengio2013}
{Bengio, Y., Courville, A., and Vincent, P}. Representation learning: A review and
  new perspectives, IEEE Transactions on Pattern Analysis and Machine
  Intelligence 35 (2013) 1798--1828.
  \label{Bengio2013}

\bibitem{Collobert2011}
{Collobert, R et al}.
  Natural language processing (almost) from scratch, Journal of Machine
  Learning Research 12 (2011) 2493--2537.

\bibitem{estat}
e-{S}tat, http://www.e-stat.go.jp/SG1/estat/GL08020103.do?\_toGL0802\\
0103\_\&tclassID=000001073038\&cycleCode=0\&requestSender=search.

\bibitem{Estrada2005}
{Estrada, E., and Vel{\'a}zquez, J}. Spectral measures of bipartivity in complex
  networks, Physical Review E 72 (2005) {}.

\bibitem{Hinton2012}
{Hinton, G et al}. Deep neural networks
  for acoustic modeling in speech recognition, IEEE Signal Processing Magazine
  29 (2012) 82--97.

\bibitem{Ishikawa1990}
{Ishikawa, M}. A structural connectionist learning algorithm with forgetting,
  Journal of Japanese Society for Artificial Intelligence 5 (1990) 595--603.

\bibitem{Jacobs1991}
{Jacobs, R et al}. Regression shrinkage and
  selection via the lasso, Neural Computation 3~(1) (1991) 79--87.

\bibitem{Krizhevsky2012}
{Krizhevsky, A., Sutskever, I., and Hinton, G}. Imagenet classification with deep
  convolutional neural networks, in: Advances in Neural Information Processing
  Systems, 2012, pp. 1097--1105.

\bibitem{LeCun2015}
{LeCun, Y., Bengio, Y., and Hinton, G}. Deep learning, Nature 521 (2015) 436--444.
  \label{LeCun2015}

\bibitem{Leung2014}
{Leung, M et al}. Deep learning of the tissue-regulated splicing
  code, Bioinformatics 30 (2014) i121--i129.

\bibitem{Meunier2006}
{Meunier, D., and Paugam-Moisy, H}. Cluster detection algorithm in neural
  networks, in: European Symposium on Artificial Neural Networks, 2006, pp.
  19--24.

\bibitem{Newman2004}
{Newman, M., and Girvan, M}. Finding and evaluating community structure in
  networks, Physical Review E 69 (2004) {}.

\bibitem{Newman2006}
{Newman, M}. Modularity and community structure in networks, Proceedings of the
  National Academy of Sciences 103~(23) (2006) 8577--8582.

\bibitem{Newman2007}
{Newman, M., and Leicht, E}. Mixture models and exploratory analysis in networks,
  Proceedings of the National Academy of Sciences 104~(23) (2007) 9564--9569.

\bibitem{Rumelhart1986}
{Rumelhart, D., Hinton, G., and Williams, R}. Learning representations by
  back-propagating errors, Nature 323 (1986) 533--536.

\bibitem{Sainath2013}
{Sainath, T et al}. Deep convolutional
  neural networks for {LVCSR}, in: Acoustics, Speech and Signal Processing,
  2013, pp. 8614--8618.

\bibitem{Sutskever2014}
{Sutskever, I., Vinyals, O., and Le, Q}. Sequence to sequence learning with neural
  networks, in: Advances in Neural Information Processing Systems, 2014, pp.
  3104--3112.

\bibitem{Tibshirani1994}
{Tibshirani, R}. Regression shrinkage and selection via the lasso, Journal of
  the Royal Statistical Society, Series B 58~(1) (1994) 267--288.

\bibitem{Tompson2014}
{Tompson, J. J et al}. Joint training of a
  convolutional network and a graphical model for human pose estimation, in:
  Advances in Neural Information Processing Systems, 2014, pp. 1799--1807.

\bibitem{Werbos1974}
{Werbos, P}. Beyond regression : new tools for prediction and analysis in the
  behavioral sciences, Ph.D. thesis, Harvard University (1974).

\bibitem{Xiong2015}
{Xiong, H et al}. The human splicing code
  reveals new insights into the genetic determinants of disease, Science 347
  (2015) {}.

\end{thebibliography}
\end{document}